\newcommand{\Q}{\mathcal{Q}}
\newtheorem{theorem}{Theorem}
\newtheorem{lemma}[theorem]{Lemma}
\newtheorem{corollary}[theorem]{Corollary}
\newtheorem{definition}{Definition}
\newcommand{\thmref}[1]{Theorem~\ref{#1}}
\renewcommand\ind[1]{\ensuremath{\mathds{1}\left[#1\right]}}
\renewcommand{\E}[2]{\mathbb{E}_{#1}\left[#2\right]}
\newcommand{\interior}[1]{%
  {\kern0pt#1}^{\mathrm{o}}%
}
\begin{document}

\twocolumn[
\icmltitle{Adaptive Reward-Poisoning Attacks against Reinforcement Learning}




\begin{icmlauthorlist}
\icmlauthor{Xuezhou Zhang}{to}
\icmlauthor{Yuzhe Ma}{to}
\icmlauthor{Adish Singla}{mpi}
\icmlauthor{Xiaojin Zhu}{to}
\end{icmlauthorlist}

\icmlaffiliation{to}{University of Wisconsin-Madison}
\icmlaffiliation{mpi}{Max Planck Institute for
	Software Systems (MPI-SWS)}
\icmlcorrespondingauthor{Xuezhou Zhang}{xzhang784@wisc.edu}

\icmlkeywords{Machine Learning, ICML}

\vskip 0.3in
]


\printAffiliationsAndNotice{} 

\begin{abstract}
In reward-poisoning attacks against reinforcement learning (RL), an attacker can perturb the environment reward $r_t$ into $r_t+\delta_t$ at each step, with the goal of forcing the RL agent to learn a nefarious policy. 
We categorize such attacks by the infinity-norm constraint on $\delta_t$: We provide a lower threshold below which reward-poisoning attack is infeasible and RL is certified to be safe; we provide a corresponding upper threshold above which the attack is feasible. 
Feasible attacks can be further categorized as non-adaptive where $\delta_t$ depends only on $(s_t,a_t, s_{t+1})$, or adaptive where $\delta_t$ depends further on the RL agent's learning process at time $t$. Non-adaptive attacks have been the focus of prior works. However, we show that under mild conditions, adaptive attacks can achieve the nefarious policy in steps polynomial in state-space size $|S|$, whereas non-adaptive attacks require exponential steps.
We provide a constructive proof that a Fast Adaptive Attack strategy achieves the polynomial rate. Finally, we show that empirically an attacker can find effective reward-poisoning attacks using state-of-the-art deep RL techniques.

\end{abstract}
\section{Introduction}
In many reinforcement learning (RL) applications the agent extracts reward signals from user feedback. For example, in recommendation systems the rewards are often represented by user clicks, purchases or dwell time~\cite{zhao2018deep, chen2019top}; in conversational AI, the rewards can be user sentiment or conversation length~\cite{dhingra2016towards,li2016deep}.
In such scenarios, an adversary can manipulate user feedback to influence the RL agent in nefarious ways. Figure \ref{fig:chatbot} describes a hypothetical scenario of how conversational AI can be attacked.
One real-world example is that of the chatbot Tay, which was quickly corrupted by a group of Twitter users who deliberately taught it misogynistic and racist remarks shortly after its release~\cite{neff2016automation}. Such attacks reveal significant security threats in the application of reinforcement learning. 
\begin{figure}[ht!]
	\centering
	\includegraphics[width=1\columnwidth]{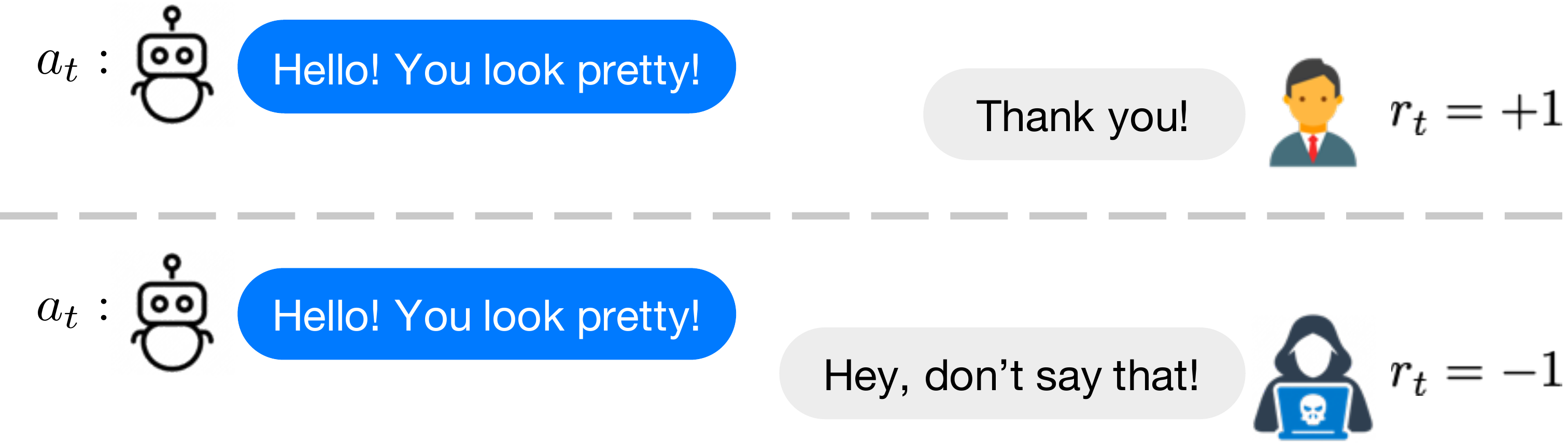}
	\caption{Example: an RL-based conversational AI is learning from real-time conversations with human users. the chatbot says ``Hello! You look pretty!'' and expects to learn from user feedback (sentiment). A benign user will respond with gratitude, which is decoded as a positive reward signal. An adversarial user, however, may express anger in his reply, which is decoded as a negative reward signal.}
	\label{fig:chatbot}
\end{figure}

In this paper, we formally study the problem of \textit{training-time attack on RL via reward poisoning}.  
As in standard RL, the RL agent updates its policy $\pi_t$ by performing action $a_t$ at state $s_t$ in each round $t$.
The environment Markov Decision Process (MDP) generates reward $r_t$ and transits the agent to $s_{t+1}$.
However, the attacker can change the reward $r_t$ to $r_t + \delta_t$, with the goal of driving the RL agent toward a target policy $\pi_t \rightarrow \pi^\dagger$.

\begin{figure}[ht!]
	\centering
	\includegraphics[width=0.8\columnwidth]{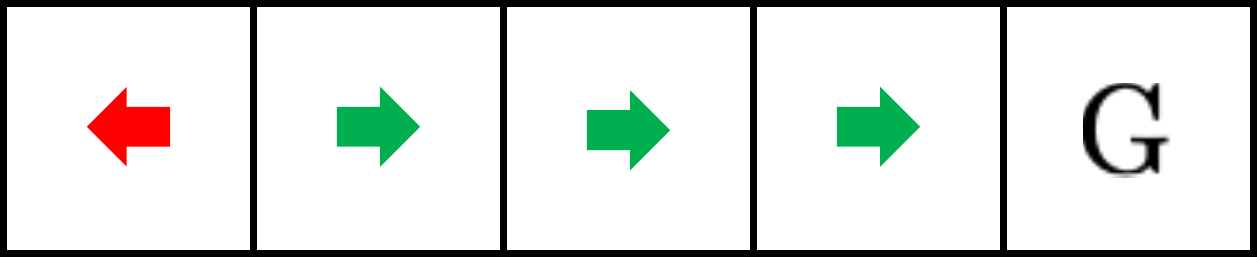}
	\caption{A chain MDP with attacker's target policy $\pi^\dagger$}
	\label{fig:running_example}
\end{figure}
Figure \ref{fig:running_example} shows a running example that we use throughout the paper. The episodic MDP is a linear chain with five states, with left or right actions and no movement if it hits the boundary. 
Each move has a -0.1 negative reward, and $G$ is the absorbing goal state with reward 1. 
Without attack, the optimal policy $\pi^*$ would be to always move right. 
The attacker's goal, however, is to force the agent to learn the nefarious target policy $\pi^\dagger$ represented by the arrows in Figure~\ref{fig:running_example}. Specifically, the attacker wants the agent to move left and hit its head against the wall whenever the agent is at the left-most state. 

Our main contributions are:
\begin{enumerate}[leftmargin=*, nolistsep]
	\item We characterize conditions under which such attacks are guaranteed to fail (thus RL is safe), and vice versa;
	\item In the case where an attack is feasible, we provide upper bounds on the attack cost in the process of achieving $\pi^\dagger$;
	\item We show that effective attacks can be found empirically using deep RL techniques.
\end{enumerate}

\section{Related Work}
\paragraph{Test-time attacks against RL}
Prior work on adversarial attacks against reinforcement learning focused primarily on \textit{test-time}, where the RL policy $\pi$ is pre-trained and fixed, and the attacker manipulates the perceived state $s_t$ to $s^\dagger_t$ in order to induce undesired action~\cite{huang2017adversarial,lin2017tactics,kos2017delving, behzadan2017vulnerability}. 
For example, 
in video games the attacker can make small pixel perturbation to a frame~\cite{goodfellow2014explaining}) to induce an action $\pi(s^\dagger_t) \neq \pi(s_t)$.
Although test-time attacks can severely impact the performance of a deployed and fixed policy $\pi$, they do not modify $\pi$ itself.
For ever-learning agents, however, the attack surface includes $\pi$. 
This motivates us to study training-time attack on RL policy.

\paragraph{Reward Poisoning:}
Reward poisoning has been studied in bandits~\cite{jun2018adversarial, peltola2019machine, altschuler2019best, liu2019data, ma2018data}, where the authors show that adversarially perturbed reward can mislead standard bandit algorithms to pull a suboptimal arm or suffer large regret. 

Reward poisoning has also been studied in \emph{batch RL}~\cite{zhang2008value,zhang2009policy,ma2019policy} where rewards are stored in a pre-collected batch data set by some behavior policy, and the attacker modifies the batch data.   Because all data are available to the attacker at once, the batch attack problem is relatively easier.
This paper instead focuses on the \textit{online} RL attack setting where reward poisoning must be done on the fly. 

\cite{huang2019deceptive} studies a restricted version of reward poisoning, in which the perturbation only depend on the current state and action: $\delta_t = \phi(s_t, a_t)$. While such restriction guarantees the convergence of Q-learning under the perturbed reward and makes the analysis easier, we show both theoretically and empirically that such restriction severely harms attack efficiency.  Our paper subsumes their results by considering more powerful attacks that can depend on the RL victim's Q-table $Q_t$. Theoretically, our analysis does not require the RL agent's underlying $Q_t$ to converge while still providing robustness certificates; see section~\ref{sec:theory}.

\paragraph{Reward Shaping:}
While this paper is phrased from the adversarial angle, the framework and techniques are also applicable to the \textit{teaching} setting, where a \emph{teacher} aims to guide the agent to learn the \emph{optimal policy} as soon as possible, by designing the reward signal. Traditionally, reward shaping and more specifically potential-based reward shaping \cite{ng1999policy} has been shown able to speed up learning while preserving the optimal policy. \cite{devlin2012dynamic} extend potential-based reward shaping to be time-varying while remains policy-preserving. More recently, intrinsic motivations\cite{schmidhuber1991possibility,oudeyer2009intrinsic, barto2013intrinsic,bellemare2016unifying} was introduced as a new form of reward shaping with the goal of encouraging exploration and thus speed up learning. Our work contributes by mathematically defining the teaching via reward shaping task as an optimal control problem, and provide computational tools that solve for problem-dependent high-performing reward shaping strategies.

\section{The Threat Model}
In the reward-poisoning attack problem, we consider three entities: the environment MDP, the RL agent, and the attacker.  Their interaction is formally described by Alg~\ref{alg:protocol}.

The environment MDP is $\mathcal M = (S, A, R, P, \mu_0)$ where $S$ is the state space, $A$ is the action space, $R: S\times A \times S \rightarrow \R$ is the reward function, $P: S\times A \times S \rightarrow \R$ is the transition probability, and $\mu_0: S\rightarrow \R$ is the initial state distribution. 
We assume $S$, $A$ are finite, and that a uniformly random policy can visit each $(s,a)$ pair infinitely often. 

We focus on an RL agent that performs standard Q-learning defined by a tuple $\mathcal{A} = (Q_0, \epsilon, \gamma, \{\alpha_t\})$, where $Q_0$ is the initial Q table, $\epsilon$ is the random exploration probability, $\gamma$ is the discounting factor, $\{\alpha_t\}$ is the learning rate scheduling as a function of $t$.
This assumption can be generalized: in the additional experiments provided in appendix \ref{sec:DQN}, we show how the same framework can be applied to attack general RL agents, such as DQN. Denote $Q^*$ as the optimal Q table that satisfies the Bellman's equation:
\begin{equation}
Q^*(s,a) =\E{P(s'|s,a)}{ R(s,a, s') + \gamma \max_{a' \in A} Q^*(s',a')}
\label{eq:Qstar}
\end{equation}
and denote the corresponding optimal policy as $\pi^*(s) = \argmax_a Q^*(s,a)$.
For notational simplicity, we assume $\pi^*$ is unique, though it is easy to generalize to multiple optimal policies, since most of our analyses happen in the space of value functions.

\begin{algorithm}[ht!]
	\caption{Reward Poisoning against Q-learning}\label{alg:protocol}
	\begin{flushleft}
		\textbf{PARAMETERS:} Agent parameters $\mathcal{A} = (Q_0, \epsilon, \gamma, \{\alpha_t\})$, MDP parameters $\mathcal{M} = (S,A,R,P,\mu_0)$.\\
	\end{flushleft}
	\begin{algorithmic}[1]
		\FOR{$t = 0,1,...$}
		\STATE agent at state $s_t$, has Q-table $Q_t$.
		\STATE agent acts according to $\epsilon$-greedy behavior policy
		\begin{equation}
			a_t \leftarrow \left\{
			\begin{array}{ll}
				\argmax_a Q_t(s_t, a), & \mbox{ w.p. } 1-\epsilon\\
				\mbox{uniform from } A, & \mbox{ w.p. } \epsilon.
			\end{array}
			\right.
			\label{eq:explorationpolicy}
		\end{equation}
		\STATE environment transits $s_{t+1} \sim P(\cdot \mid s_t, a_t)$, produces reward $r_t=R(s_t,a_t,s_{t+1})$.
		\STATE attacker poisons the reward to $r_t+\delta_t$.
		\STATE agent receives $(s_{t+1}, r_t+\delta_t)$,  performs Q-learning update:
		\begin{align}
			&Q_{t+1}(s_t,a_t) \leftarrow (1-\alpha_t) Q_t(s_t,a_t) +\label{eq:Qlearning}\\
			&\alpha_t \left( r_t + \delta_t + \gamma \max_{a' \in A} Q_t(s_{t+1}, a') \right)\nonumber
		\end{align}
		\STATE environment resets if episode ends: $s_{t+1} \sim \mu_0$.
		\ENDFOR
	\end{algorithmic}
\end{algorithm}

\paragraph{The Threat Model}
The attacker sits between the environment and the RL agent. 
In this paper we focus on {white-box} attacks: 
the attacker has knowledge of the environment MDP and the RL agent's Q-learning algorithm, 
except for their future randomness.
Specifically, at time $t$ the attacker observes 
the learner Q-table $Q_t$, state $s_t$, action $a_t$, the environment transition $s_{t+1}$ and reward $r_t$. 
The attacker can choose to add a perturbation $\delta_t\in \R$ to the current environmental reward $r_t$. The RL agent receives poisoned reward $r_t+\delta_t$.
We assume the attack is inf-norm bounded: $|\delta_t|\leq \Delta, \forall t$.

There can be many possible attack goals against an RL agent: forcing the RL agent to perform certain actions; reaching or avoiding certain states; or maximizing its regret. In this paper, we focus on a specific attack goal: \textbf{policy manipulation}. Concretely, the goal of policy manipulation is to force a target policy $\pi^\dagger$ on the RL agent for as many rounds as possible. 

\begin{definition} Target (partial) policy $\pi^\dagger: S \mapsto 2^A$: For each $s \in S$, $\pi^\dagger(s) \subseteq A$ specifies the set of actions desired by the attacker. 
\end{definition}

The partial policy $\pi^\dagger$ allows the attacker to desire multiple target actions on one state.  In particular, if $\pi^\dagger(s)=A$ then $s$ is a state that the attacker ``does not care.'' Denote $S^\dagger = \{s\in S: \pi^\dagger(s)\neq A \}$ the set of \textbf{target states} on which the attacker does have a preference.
In many applications, the attacker only cares about the agent's behavior on a small set of states, namely $|S^\dagger| \ll |S|$.

For RL agents utilizing a Q-table, a target policy $\pi^\dagger$ induces a set of Q-tables:
\begin{definition} Target Q-table set 
\begin{equation}
	\mathcal Q^\dagger := \{Q: \max_{a\in \pi^\dagger(s)} Q(s, a) > \max_{a\notin \pi^\dagger(s)} Q(s, a), \forall s\in S^\dagger\}\nonumber
\end{equation}
\end{definition}
If the target policy $\pi^\dagger$ always specifies a singleton action or does not care on all states, then $\mathcal Q^\dagger$ is a convex set.
But in general when $1 < |\pi^\dagger(s)| < |A|$ on any $s$, $\mathcal Q^\dagger$ will be a union of convex sets but itself can be in general non-convex.

\section{Theoretical Guarantees}
\label{sec:theory}
\begin{figure*}[ht!]
	\centering
	\includegraphics[width=1.7\columnwidth]{./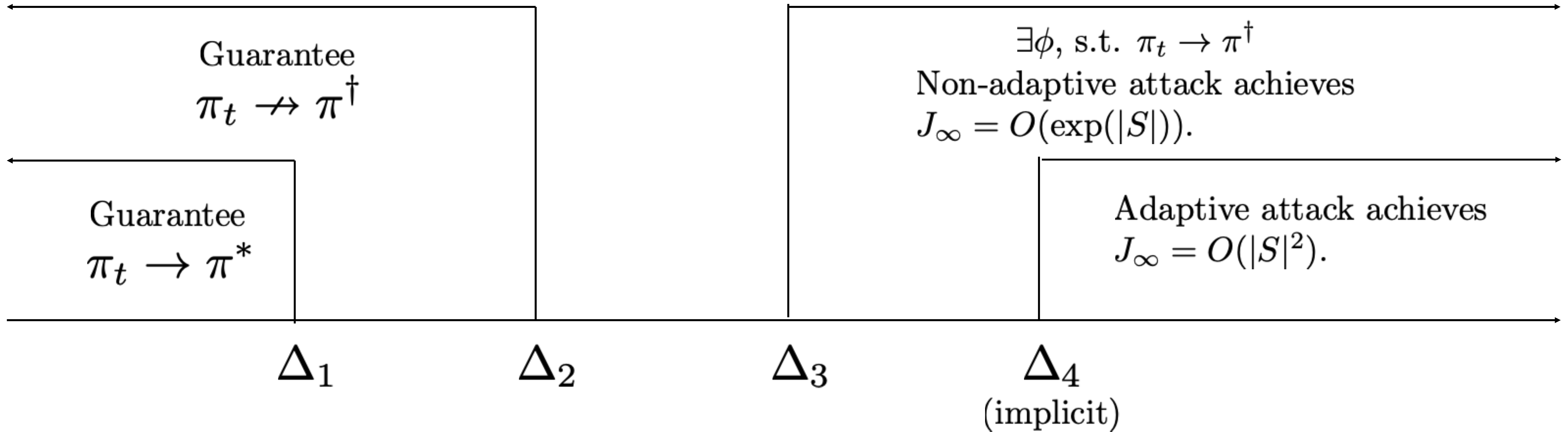}
	\caption{A summary diagram of the theoretical results.}
	\label{fig:numberline}
\end{figure*}
Now, we are ready to formally define the \emph{optimal attack} problem. At time $t$, the attacker observes an \emph{attack state} (N.B. distinct from MDP state $s_t$):
\begin{equation}
\xi_t := (s_t, a_t, s_{t+1}, r_t, Q_t)\in \Xi
\end{equation}
which jointly characterizes the MDP and the RL agent.
The attacker's goal is to find an \emph{attack policy}  $\phi: \Xi\rightarrow [-\Delta, \Delta]$, where for $\xi_t \in \Xi$ the \emph{attack action} is $\delta_t := \phi(\xi_t)$, that minimizes the number of rounds on which the agent's $Q_t$ disagrees with the attack target $\mathcal Q^\dagger$:
\begin{equation}\label{eq:conc_obj}
\min_{\phi} \quad \mathbb E_{\phi} \sum_{t=0}^\infty\mathbf{1}{[Q_t \notin \mathcal Q^\dagger]}, 
\end{equation}
where the expectation accounts for randomness in Alg~\ref{alg:protocol}. We denote $J_\infty(\phi) = E_{\phi} \sum_{t=0}^\infty\mathbf{1}{[Q_t \notin \mathcal Q^\dagger]}$ the total attack cost, and $J_T(\phi) = E_{\phi} \sum_{t=0}^T\mathbf{1}{[Q_t \notin \mathcal Q^\dagger]}$ the finite-horizon cost.
We say the attack is \emph{feasible} if~\eqref{eq:conc_obj} is finite. 

Next, we characterize attack feasibility in terms of poison magnitude constraint $\Delta$, as summarized in Figure~\ref{fig:numberline}.
Proofs to all the theorems can be found in the appendix.
\subsection{Attack Infeasibility}
\label{sec:cert}
Intuitively, smaller $\Delta$ makes it harder for the attacker to achieve the attack goal. 
We show that there is a threshold $\Delta_1$ such that for any $\Delta < \Delta_1$ the RL agent is eventually safe, in that $\pi_t \rightarrow \pi^*$ the correct MDP policy.  This implies that \eqref{eq:conc_obj} is infinite and the attack is infeasible.
There is a potentially larger $\Delta_2$ such that for any $\Delta < \Delta_2$ the attack is also infeasible, though $\pi_t$ may not converge to $\pi^*$.

While the above statements are on $\pi_t$, our analysis is via the RL agent's underlying $Q_t$.
Note that under attack the rewards $r_t+\delta_t$ are no longer stochastic, and we cannot utilize the usual Q-learning convergence guarantee.
Nonetheless, we show that $Q_t$ is bounded in a polytope in the Q-space.
\begin{theorem}[Boundedness of Q-learning]\label{delta12}
	Assume that $\delta_t<\Delta$ for all $t$, and the stepsize $\alpha_t$'s satisfy that $\alpha_t\leq 1$ for all $t$, $\sum \alpha_t=\infty$ and $\sum \alpha_t^2<\infty$. Let $Q^*$ be defined as~\eqref{eq:Qstar}. Then, for any attack sequence $\{\delta_t\}$, there exists $N\in\mathbb{N}$ such that, with probability $1$, for all $t\geq N$, we have
	\begin{eqnarray}
		Q^*(s,a) - \frac{\Delta}{1-\gamma} \leq Q_t(s,a) \leq Q^*(s,a) + \frac{\Delta}{1-\gamma}.
	\end{eqnarray}
\end{theorem}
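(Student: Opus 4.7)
The plan is to sandwich the attacked process $\{Q_t\}$ between two auxiliary Q-learning processes run on MDPs whose rewards are uniformly shifted by $+\Delta$ and $-\Delta$. The key arithmetic fact is that adding a constant $c$ to every reward in an MDP leaves the optimal policy unchanged and shifts every optimal Q-value by exactly $c/(1-\gamma)$, since $\sum_{k=0}^\infty \gamma^k c = c/(1-\gamma)$. Let $\bar{\mathcal M}$ and $\underline{\mathcal M}$ denote the original MDP with reward function $R(s,a,s')+\Delta$ and $R(s,a,s')-\Delta$ respectively. Their optimal Q-tables, obtained from the Bellman equation~\eqref{eq:Qstar}, satisfy $\bar Q^*(s,a)=Q^*(s,a)+\Delta/(1-\gamma)$ and $\underline Q^*(s,a)=Q^*(s,a)-\Delta/(1-\gamma)$.

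Next, I would run three Q-learning processes in lockstep on the same sample trajectory $(s_t,a_t,s_{t+1})$, with the same stepsizes $\alpha_t$ and the same initial table $Q_0$: the attacked process $Q_t$ driven by reward $r_t+\delta_t$, an upper process $\bar Q_t$ driven by $r_t+\Delta$, and a lower process $\underline Q_t$ driven by $r_t-\Delta$. A straightforward induction on $t$ then gives $\underline Q_t(s,a)\leq Q_t(s,a)\leq \bar Q_t(s,a)$ for every $(s,a)$: at an unvisited pair all three tables are preserved, while at the visited pair $(s_t,a_t)$ the update $(1-\alpha_t)Q_t(s_t,a_t)+\alpha_t\bigl(r_t+\text{shift}+\gamma\max_{a'}Q_t(s_{t+1},a')\bigr)$ is non-decreasing in the reward shift and coordinatewise non-decreasing in $Q_t$, since $\alpha_t\in[0,1]$ and $\max_{a'}$ is monotone in its argument.

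With the sandwich in hand, the final step is to invoke the classical almost-sure convergence theorem for Q-learning (Watkins--Dayan; Jaakkola--Jordan--Singh) on $\bar Q_t$ and $\underline Q_t$. These are genuine stochastic-approximation iterations on the \emph{fixed}, unperturbed MDPs $\bar{\mathcal M}$ and $\underline{\mathcal M}$; the Robbins--Monro conditions on $\alpha_t$ hold by assumption, and the $\epsilon$-greedy behaviour together with the assumption that a uniformly random policy visits every $(s,a)$ infinitely often guarantees the required infinite-visitation condition. Hence $\bar Q_t\to Q^*+\Delta/(1-\gamma)$ and $\underline Q_t\to Q^*-\Delta/(1-\gamma)$ almost surely, which combined with the sandwich yields the claimed two-sided bound for all $t\geq N$.

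The main technical obstacle, and the reason one cannot apply Q-learning convergence to $\{Q_t\}$ directly, is that the perturbed reward $r_t+\delta_t$ is not conditionally i.i.d.\ given $(s_t,a_t,s_{t+1})$: when the attacker is adaptive, $\delta_t$ depends on the entire history through $Q_t$, destroying the martingale-difference structure of the driving noise on which the standard stochastic-approximation argument relies. The coupling construction sidesteps this by reducing the non-stationary, adversarial process to two unattacked auxiliary processes whose convergence is classical; a minor subtlety is that almost-sure convergence only yields the two-sided inequality in the $\limsup/\liminf$ sense, so any slack $\epsilon>0$ introduced by finite-time fluctuations of $\bar Q_t,\underline Q_t$ around their limits must be absorbed into the choice of $N$.
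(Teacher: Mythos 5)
Your proposal is correct and follows essentially the same route as the paper's own proof: couple the attacked iterates with two auxiliary Q-learning runs on the same trajectory driven by $r_t+\Delta$ and $r_t-\Delta$, establish the sandwich by induction using monotonicity of the update in the reward and in $Q$, and invoke classical Q-learning convergence on the shifted MDPs whose optimal tables are $Q^*\pm\Delta/(1-\gamma)$. Your closing remark about the $\epsilon$-slack needed to pass from almost-sure convergence of the auxiliary processes to the stated two-sided bound is in fact a point the paper glosses over, so your write-up is if anything slightly more careful.
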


\textbf{Remark 1:} The bounds in Theorem \ref{delta12} are in fact tight. The lower and upper bound can be achieved by setting $\delta_t = -\Delta$ or $+\Delta$ respectively.

We immediately have the following two infeasibility certificates.
\begin{corollary}[Strong Infeasibility Certificate]
	Define 
	\begin{equation}
		\Delta_1 = (1-\gamma)\min_s\left[Q^*(s,\pi^*(s)) - \max_{a\neq \pi^*(s)} Q^*(s,a)\right]/2.\nonumber
	\end{equation}
	If $\Delta< \Delta_1$, there exist $N\in\mathbb{N}$ such that, with probability $1$, for all $t>N$, $\pi_t = \pi^*$. In other words, eventually the RL agent learns the optimal MDP policy $\pi^*$ despite the attacks.
\end{corollary}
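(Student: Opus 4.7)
The plan is to derive the corollary as an almost immediate consequence of Theorem~\ref{delta12}. The essence is that if every $Q_t(s,a)$ is eventually pinned within a $\frac{\Delta}{1-\gamma}$ window around $Q^*(s,a)$, and if the optimal action's $Q^*$-advantage at every state exceeds twice this window, then the argmax cannot flip away from $\pi^*$.

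First, I would invoke Theorem~\ref{delta12} to obtain an $N \in \mathbb{N}$ such that, almost surely for all $t \geq N$ and all $(s,a)$,
\begin{equation*}
    |Q_t(s,a) - Q^*(s,a)| \leq \frac{\Delta}{1-\gamma}.
\end{equation*}
Next, fix any $s \in S$ and any $a \neq \pi^*(s)$. Applying the lower bound at $(s, \pi^*(s))$ and the upper bound at $(s,a)$, I would write
\begin{equation*}
    Q_t(s, \pi^*(s)) - Q_t(s, a) \;\geq\; \bigl[Q^*(s,\pi^*(s)) - Q^*(s,a)\bigr] - \frac{2\Delta}{1-\gamma}.
\end{equation*}

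Then I would plug in the hypothesis $\Delta < \Delta_1$, which by definition gives $\frac{2\Delta}{1-\gamma} < Q^*(s,\pi^*(s)) - \max_{a'\neq\pi^*(s)} Q^*(s,a') \leq Q^*(s,\pi^*(s)) - Q^*(s,a)$. This makes the right-hand side strictly positive, so $Q_t(s, \pi^*(s)) > Q_t(s, a)$ for every non-optimal action $a$ and every state $s$. Hence the greedy policy induced by $Q_t$ equals $\pi^*$ for all $t \geq N$, which is the desired conclusion. (Uniqueness of $\pi^*$ was already assumed after \eqref{eq:Qstar}, so the $\argmax$ is unambiguous.)

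There is no real obstacle here: the only substantive work was done in Theorem~\ref{delta12}, and the corollary is a one-line gap-vs-perturbation comparison. The only points requiring a little care are (i) making sure the almost-sure statement and the choice of $N$ are uniform over the finitely many $(s,a)$ pairs (which is immediate since $|S|,|A|<\infty$), and (ii) noting that the constant $\Delta_1$ is well-defined and positive precisely because $\pi^*$ is the unique optimal policy, so the inner minimum is strictly positive.
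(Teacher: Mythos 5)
Your proof is correct and follows exactly the route the paper intends: the paper gives no separate argument for this corollary beyond stating that it follows ``immediately'' from Theorem~\ref{delta12}, and your gap-versus-$\frac{2\Delta}{1-\gamma}$ comparison is precisely that immediate derivation. Nothing is missing.
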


\begin{corollary}[Weak Infeasibility Certificate]
	Given attack target policy $\pi^\dagger$, define 
	\begin{equation}
		\Delta_2 = (1-\gamma)\max_s\left[Q^*(s,\pi^*(s)) - \max_{a\in \pi^\dagger(s)} Q^*(s,a)\right]/2.\nonumber
	\end{equation}
	If $\Delta < \Delta_2$, there exist $N\in\mathbb{N}$ such that, with probability $1$, for all $t>N$, $\pi_t(s) \notin \pi^\dagger(s)$ for some $s\in S^\dagger$. In other words, eventually the attacker is unable to enforce $\pi^\dagger$ (though $\pi_t$ may not settle on $\pi^*$ either).
\end{corollary}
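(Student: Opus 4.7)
The plan is to apply Theorem~\ref{delta12} at a single state that realizes the maximum in the definition of $\Delta_2$. Let $s^\star$ be a maximizer of $Q^*(s,\pi^*(s)) - \max_{a \in \pi^\dagger(s)} Q^*(s,a)$ over $s \in S$, so that the attained gap equals $2\Delta_2/(1-\gamma)$. Before invoking the bound, I would first verify that $s^\star$ is a legitimate witness: if $s^\star \notin S^\dagger$ then $\pi^\dagger(s^\star) = A$ and the gap collapses to $0$, and similarly if $\pi^*(s^\star) \in \pi^\dagger(s^\star)$ then optimality of $\pi^*$ with respect to $Q^*$ forces the gap to be $0$. Since $\Delta < \Delta_2$ forces $\Delta_2 > 0$, neither degenerate case can occur, so $s^\star \in S^\dagger$ and $\pi^*(s^\star) \notin \pi^\dagger(s^\star)$.

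Next, by Theorem~\ref{delta12} there is a (random but almost-surely finite) $N \in \mathbb{N}$ such that, with probability $1$, $|Q_t(s,a) - Q^*(s,a)| \leq \Delta/(1-\gamma)$ for every $(s,a)$ and every $t \geq N$. Combining the lower bound at $(s^\star, \pi^*(s^\star))$ with the upper bound at $(s^\star, a)$ for each $a \in \pi^\dagger(s^\star)$ yields
\begin{equation*}
Q_t(s^\star, \pi^*(s^\star)) - \max_{a \in \pi^\dagger(s^\star)} Q_t(s^\star, a) \;\geq\; \frac{2\Delta_2}{1-\gamma} - \frac{2\Delta}{1-\gamma} \;>\; 0.
\end{equation*}
Hence the action $\pi^*(s^\star)$, which lies outside $\pi^\dagger(s^\star)$, strictly dominates every action in $\pi^\dagger(s^\star)$ under $Q_t$. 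Therefore the greedy action $\pi_t(s^\star) = \argmax_{a} Q_t(s^\star, a)$ cannot lie in $\pi^\dagger(s^\star)$, which is exactly the stated conclusion (with the same witness state $s^\star$ for every $t > N$).

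The substantive analytic work is entirely contained in Theorem~\ref{delta12}; once that two-sided envelope is in hand, this corollary reduces to a direct comparison. The only point I would double-check carefully is that the maximizer $s^\star$ genuinely witnesses a strictly positive gap on some action \emph{outside} $\pi^\dagger$, since this is what ultimately pins down the $(1-\gamma)/2$ scaling in the definition of $\Delta_2$ and distinguishes the weak certificate from the strong one in the preceding corollary.
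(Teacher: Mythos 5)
Your proof is correct and follows exactly the route the paper intends: the corollary is stated as an immediate consequence of Theorem~\ref{delta12}, and your argument (instantiating the two-sided bound at a maximizing state $s^\star$, after checking that $\Delta<\Delta_2$ rules out the degenerate cases $s^\star\notin S^\dagger$ and $\pi^*(s^\star)\in\pi^\dagger(s^\star)$) is the natural fleshing-out of that one-line deduction. The fixed witness $s^\star$ for all $t>N$ is in fact slightly stronger than the stated ``for some $s\in S^\dagger$'' conclusion, which is fine.
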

Intuitively, an MDP is difficult to attack if its margin $\min_s\left[Q^*(s,\pi^*(s)) - \max_{a\neq \pi^*(s)} Q^*(s,a)\right]$ is large.
This suggests a defense: for RL to be robust against poisoning, the environmental reward signal should be designed such that the optimal actions and suboptimal actions have large performance gaps.

\subsection{Attack Feasibility}
We now show there is a threshold $\Delta_3$ such that
for all $\Delta > \Delta_3$ 
the attacker can enforce $\pi^\dagger$ for all but finite number of rounds.

\begin{theorem}\label{delta3}
 Given a target policy $\pi^\dagger$, define 
 \begin{equation}
\Delta_3 = \frac{1+\gamma}{2}\max_{s\in S^\dagger}[\max_{a\notin \pi^\dagger(s)}Q^*(s,a)- \max_{a\in \pi^\dagger(s)}Q^*(s,a)]_+
\end{equation}
where $[x]_{+}\coloneqq\max(x,0)$. Assume the same conditions on $\alpha_t$ as in~\thmref{delta12}. 
If $\Delta > \Delta_3$, there is a feasible attack policy $\phi^{sas}_{\Delta_3}$. Furthermore, $J_\infty(\phi^{sas}_{\Delta_3})\leq O(L^5)$, where $L$ is the covering number.
\end{theorem}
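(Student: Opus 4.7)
The plan is to reduce the feasibility of the attack to Q-learning convergence on a perturbed MDP. Specifically, I will construct a non-adaptive rule $\phi^{sas}_{\Delta_3}: S \times A \times S \to [-\Delta, \Delta]$ depending only on the transition $(s_t, a_t, s_{t+1})$ and set $\tilde R := R + \phi^{sas}_{\Delta_3}$. Because $\phi^{sas}_{\Delta_3}$ does not depend on $Q_t$, the poisoned sample $(s_t, a_t, s_{t+1}, r_t+\delta_t)$ is a conditionally unbiased draw from $\tilde{\mathcal M} := (S, A, \tilde R, P, \mu_0)$, so the victim's trajectory coincides in law with Q-learning run against $\tilde{\mathcal M}$. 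The proof then splits into two subgoals: (i) choose $\phi^{sas}_{\Delta_3}$ such that the optimal Q-table $\tilde Q^*$ of $\tilde{\mathcal M}$ lies strictly inside $\Q^\dagger$ with a positive margin, and (ii) bound the hitting time of $Q_t$ into $\Q^\dagger$ by a polynomial in the covering number $L$.

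For subgoal (i), the natural construction is to shift rewards at target states so as to promote target actions and demote non-target actions: fix $a^\dagger(s) \in \pi^\dagger(s)$ for each $s \in S^\dagger$ and let
\[
\phi^{sas}_{\Delta_3}(s,a,s') = \begin{cases} +c & s\in S^\dagger,\; a = a^\dagger(s),\\ -c & s \in S^\dagger,\; a \notin \pi^\dagger(s),\\ 0 & \text{otherwise,} \end{cases}
\]
for some $c \leq \Delta$. Let $g_s := [\max_{a\notin\pi^\dagger(s)}Q^*(s,a) - \max_{a\in\pi^\dagger(s)}Q^*(s,a)]_+$. The direct contribution of $\phi^{sas}_{\Delta_3}$ to the Q-gap at $s$ is a swing of $2c$ toward $a^\dagger(s)$, while the indirect contribution from propagation through the discounted term $\gamma\,\mathbb{E}[\max_{a'}\tilde Q^*(s',a') - \max_{a'}Q^*(s',a')]$ can shrink this gap by at most $2\gamma\|\tilde Q^*-Q^*\|_\infty$. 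A joint fixed-point argument on the perturbed Bellman equation, tightening these two bounds simultaneously rather than chaining them naively, shows that the net widened gap at each $s \in S^\dagger$ is at least $\tfrac{2c}{1+\gamma}$. Requiring this to dominate $g_s$ recovers the threshold $c > \tfrac{1+\gamma}{2} g_s$, which is exactly $\Delta > \Delta_3$, and certifies a positive margin $\eta > 0$ for $\tilde Q^*$ inside $\Q^\dagger$.

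For subgoal (ii), having established $\tilde Q^* \in \Q^\dagger$ with margin $\eta$, I invoke a standard finite-sample guarantee for tabular Q-learning under $\epsilon$-greedy exploration on $\tilde{\mathcal M}$: after $T = O(L^5)$ steps, with high probability $\|Q_t - \tilde Q^*\|_\infty < \eta/2$, hence $Q_t \in \Q^\dagger$ for every $t > T$. Summing $\mathbf{1}[Q_t \notin \Q^\dagger]$ over $t$ yields $J_\infty(\phi^{sas}_{\Delta_3}) \leq O(L^5)$.

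The main obstacle is the constant tracking in subgoal (i): because the immediate $2c$ swing and the $\gamma$-discounted propagation through $\tilde Q^*$ interact via the Bellman fixed point, the sharp factor $\tfrac{1+\gamma}{2}$ emerges only when the perturbation is solved jointly with the fixed-point equation rather than bounded via a crude telescoping of contractions (the naive route would only give a useful threshold for $\gamma<1/2$). Subgoal (ii) is comparatively routine; the dependence on $L$ is imported from existing polynomial-time Q-learning analyses and combined with the margin $\eta$ from (i) via a union bound over $(s,a)$ pairs.
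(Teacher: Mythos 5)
Your overall architecture (turn the non-adaptive attack into Q-learning on a perturbed MDP $\tilde{\mathcal M}$, show its optimal Q-table lies in $\Q^\dagger$ with a margin, then import a polynomial covering-time convergence bound) matches the paper, and your subgoal (ii) is essentially the paper's use of the Even-Dar--Mansour bound with $\omega\approx 1/2$ followed by summing tail probabilities. The gap is in subgoal (i): the uniform $\pm c$ reward perturbation does not in general put $\tilde Q^*$ inside $\Q^\dagger$, and the claimed ``joint fixed-point'' bound of $2c/(1+\gamma)$ on the net gap widening is false for that construction. Concretely, take $S^\dagger=\{s_0,s_1\}$ with all original rewards $0$ (so $\Delta_3=0$): at $s_1$ the target action self-loops, so the $+c$ bonus compounds and $\tilde V(s_1)=c/(1-\gamma)$; at $s_0$ the target action leads to a zero-value sink while the non-target action leads to $s_1$, so $\tilde Q^*(s_0,a^\dagger)=c$ versus $\tilde Q^*(s_0,a')=-c+\gamma c/(1-\gamma)$. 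For $\gamma\ge 2/3$ the non-target action wins for every $c$, so no choice of $c\le\Delta$ makes this attack feasible even though the theorem promises feasibility for any $\Delta>0$. The direct $2c$ swing cannot be guaranteed to dominate the indirect inflation $\gamma c/(1-\gamma)$ of downstream values, which is exactly the obstacle you acknowledge but do not actually overcome.

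The paper sidesteps this by working in the opposite direction: it first defines the target Q-table $Q'$ explicitly as $Q^*\pm\frac{\Delta}{1+\gamma}$ on target (resp.\ non-target) actions at target states and $Q^*$ elsewhere, so the gap at every $s\in S^\dagger$ is $\frac{2\Delta}{1+\gamma}$ \emph{by construction} and values never compound; membership $Q'\in\Q^\dagger$ then reduces to the one-line inequality $Q^*(s,a^\dagger)-Q^*(s,a')>-\frac{2\Delta}{1+\gamma}$, which is exactly $\Delta>\Delta_3$. It then \emph{inverts} the Bellman equation to obtain the unique reward $R'(s,a)=Q'(s,a)-\gamma\,\mathbb E_{P(s'\mid s,a)}[\max_{a'}Q'(s',a')]$ whose optimal Q-function is $Q'$, and sets $\delta=R'-R$. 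This perturbation is not uniform $\pm c$: in the example above it assigns only $\frac{(1-\gamma)\Delta}{1+\gamma}$ to the self-looping target action and the full $-\Delta$ to the non-target action at $s_0$, precisely cancelling the downstream inflation. The factor $\frac{1+\gamma}{2}$ in $\Delta_3$ then comes not from a gap-propagation fixed point but from the budget check $\|R'-R\|_\infty\le(1+\gamma)\|Q'-Q^*\|_\infty=(1+\gamma)\cdot\frac{\Delta}{1+\gamma}=\Delta$. To repair your proof you should replace your reward-space ansatz with this Q-space construction (or otherwise solve for a state-dependent perturbation from the inverted Bellman equation); as written, step (i) fails.
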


\begin{algorithm}[ht!]
	\caption{The Non-Adaptive Attack $\phi^{sas}_{\Delta_3}$} \label{alg:delta3}
	\begin{flushleft}
		\textbf{PARAMETERS:} target policy $\pi^\dagger$, agent parameters $\mathcal{A} = (Q_0, \epsilon, \gamma, \{\alpha_t\})$, MDP parameters $\mathcal{M} = (S,A,R,P,\mu_0)$, maximum magnitude of poisoning $\Delta$.\\
		\textbf{def Init($\pi^\dagger, \mathcal{A}, \mathcal{M}$): }
	\end{flushleft}
	\begin{algorithmic}[1]
		\STATE Construct a Q-table $Q^\prime$, where $Q^\prime(s,a)$ is defined as
		$$
		\left\{
		\begin{aligned}
		&Q^*(s,a)+\frac{\Delta}{(1+\gamma)}, &&\mbox{ if } s\in S^\dagger, a\in\pi^\dagger(s)\\
		&Q^*(s,a)-\frac{\Delta}{(1+\gamma)}, &&\mbox{ if } s\in S^\dagger, a\notin \pi^\dagger(s)\\
		&Q^*(s,a), &&\mbox{ if } s\notin S^\dagger
		\end{aligned}
		\right.
		$$
		\STATE Calculate a new reward function 
		$$
		R^\prime(s,a) = Q^\prime(s,a) - \gamma\E{P(s^\prime\mid s, a)}{\max _{a^\prime} Q^\prime(s^\prime, a^\prime)}.
		$$
		\STATE Define the attack policy $\phi^{sas}_{\Delta_3}$ as:
		$$
		\phi^{sas}_{\Delta_3}(s,a)=R^\prime(s,a) - \E{P(s^\prime\mid s, a)}{R(s,a,s)}, \forall s,a.
		$$
	\end{algorithmic}
	\begin{flushleft}
		\textbf{def Attack($\xi_t$): }
	\end{flushleft}
	\begin{algorithmic}[1]
		\STATE Return $\phi^{sas}_{\Delta_3}(s_t,a_t)$
	\end{algorithmic}
\end{algorithm}
Theorem \ref{delta3} is proved by constructing an attack policy $\phi^{sas}_{\Delta_3}(s_t, a_t)$, detailed in Alg. \ref{alg:delta3}.
Note that this attack policy does not depend on $Q_t$.
We call this type of attack \emph{non-adaptive attack}. Under such construction, one can show that Q-learning converges to the target policy $\pi^\dagger$.
Recall the covering number $L$ is the upper bound on the minimum sequence length starting from any $(s,a)$ pair and follow the MDP until all (state, action) pairs appear in the sequence~\cite{even2003learning}.
It is well-known that $\epsilon$-greedy exploration has a covering time $L\leq O(e^{|S|})$~\cite{kearns2002near}. Prior work has constructed examples on which this bound is tight~\cite{jin2018q}. 
We show in appendix \ref{sec:coveringtimeproof} that on our toy example $\epsilon$-greedy indeed has a covering time $O(e^{|S|})$. 
Therefore, the objective value of~\eqref{eq:conc_obj} for non-adaptive attack is upper-bounded by $O(e^{|S|})$.
In other words, the non-adaptive attack is slow.

\subsection{Fast Adaptive Attack (FAA)}
We now show that there is a fast adaptive attack $\phi^{\xi}_{FAA}$ which depends on $Q_t$ and achieves $J_\infty$ polynomial in $|S|$.
The price to pay is a larger attack constraint $\Delta_4$, and the requirement that the attack target states are sparse: $k=|S^\dagger|\leq O(\log|S|)$.
The FAA attack policy $\phi^{\xi}_{FAA}$ is defined in Alg.~\ref{alg:FAA}. 

 Conceptually, the FAA algorithm ranks the target states in descending order by their distance to the starting states, and focusing on attacking one target state at a time. Of central importance is the temporary target policy $\nu_i$, which is designed to navigate the agent to the currently focused target state $s^\dagger_{(i)}$, while not altering the already achieved target actions on target states of earlier rank. This allows FAA to achieve a form of program invariance: after FAA achieves the target policy in a target state $s^\dagger_{(i)}$, the target policy on target state $(i)$ will be preserved indefinitely.
 We provide a more detailed walk-through of Alg.~\ref{alg:FAA} with examples in appendix \ref{sec:FAA}.
\begin{algorithm}[ht!]
	\caption{The Fast Adaptive Attack (FAA)}\label{alg:FAA}
	\begin{flushleft}
		\textbf{PARAMETERS:} target policy $\pi^\dagger$, margin $\eta$, agent parameters $\mathcal{A} = (Q_0, \epsilon, \gamma, \{\alpha_t\})$, MDP parameters $\mathcal{M} = (S,A,R,P,\mu_0)$.\\
		\textbf{def Init($\pi^\dagger, \mathcal{A}, \mathcal{M}, \eta$): }
	\end{flushleft}
	\begin{algorithmic}[1]
		\STATE Given $(s_t, a_t, Q_t)$, define the hypothetical Q-update function without attack as $Q_{t+1}'(s_t,a_t) = (1-\alpha_t) Q_t(s_t,a_t) +\alpha_t \left( r_t + \gamma (1-EOE) \max_{a' \in A} Q_t(s_{t+1}, a') \right)$. 
		\STATE Given $(s_t, a_t, Q_t)$, denote the greedy attack function at $s_t$ w.r.t. a target action set $A_{s_t}$ as $g(A_{s_t})$, defined as
		\begin{eqnarray}
		\left\{
		\begin{array}{ll}
		\frac{1}{\alpha_t}[\max_{a\notin A_{s_t}} Q_t(s_t,a) -\\ 
		\qquad Q'_{t+1}(s_t,a_t)+\eta]_+ & \mbox{ if } a_t \in A_{s_t}\\
		\frac{1}{\alpha_t}[\max_{a\in A_{s_t}} Q_t(s_t,a) -\\ 
		\qquad Q'_{t+1}(s_t,a_t)+\eta]_-
		& \mbox{ if } a_t \notin A_{s_t}.
		\end{array}
		\right.
		\end{eqnarray}
		\STATE Define $\mbox{Clip}_\Delta(\delta) = \min(\max(\delta, -\Delta), \Delta)$.
		\STATE Rank the target states in descending order as $\{s_{(1)}^\dagger, ..., s_{(k)}^\dagger\}$, according to their shortest $\epsilon$-distance to the initial state $\E{s\sim \mu_0}{d^\epsilon(s,s_{(i)})}$.
		\FOR{$i = 1,...,k$}
		\STATE Define the temporary target policy $\nu_i$ as
		\begin{eqnarray}
		\nu_i(s) =
		\left\{
		\begin{array}{ll}
		\pi_{s_{(i)}^\dagger}(s) & \mbox{ if } s\notin \{s_{(j)}^\dagger: j\leq i\}\\
		\pi^\dagger(s)
		& \mbox{ if } s\in \{s_{(j)}^\dagger: j\leq i\}.
		\end{array}
		\right.\nonumber
		\end{eqnarray}
		\ENDFOR
	\end{algorithmic}
	\begin{flushleft}
		\textbf{def Attack($\xi_t$): }
	\end{flushleft}
	\begin{algorithmic}[1]
		\FOR{$i = 1,...,k$}
		\IF{$\argmax_a Q_{t}(s_{(i)}^\dagger,a) \notin \pi^\dagger(s_{(i)}^\dagger)$}
		\STATE Return $\delta_t\leftarrow \mbox{Clip}_{\Delta}(g(\{\nu_i(s_t)\}))$.\\
		\ENDIF
		\ENDFOR
		\STATE Return $\delta_t\leftarrow \mbox{Clip}_{\Delta}(g(\{\pi^\dagger(s_t)\}))$.
	\end{algorithmic}
\end{algorithm}

\begin{definition}
	Define the shortest $\epsilon$-\textbf{distance} from $s$ to $s'$ as
	\begin{align}
	&d_\epsilon(s,s') = \min_{\pi\in \Pi}\E{\pi_{\epsilon}}{T}\\
	&\mbox{s.t. }  s_0 = s, s_T = s', s_t \neq s', \forall t<T\nonumber
	\end{align}
	where $\pi_{\epsilon}$ denotes the epsilon-greedy policy based on $\pi$.
	Since we are in an MDP, there exists a common (partial) policy $\pi_{s'}$ that achieves $d_\epsilon(s,s')$ for all source state $s\in S$. Denote $\pi_{s'}$ as the \textbf{navigation policy} to $s'$.
\end{definition}

\begin{definition}
	The $\epsilon$-\textbf{diameter} of an MDP is defined as the longest shortest $\epsilon$-distance between pairs of states in $S$:
	\begin{equation}
	D_\epsilon = \max_{s,s'\in S} d_\epsilon(s,s')
	\end{equation}
\end{definition}

\begin{theorem} \label{delta4}
	Assume that the learner is running $\epsilon$-greedy Q-learning algorithm on an episodic MDP with $\epsilon$-diameter $D_\epsilon$ and maximum episode length $H$, and the attacker aims at $k$ distinct target states, i.e. $|S^\dagger| = k$. If $\Delta$ is large enough that the $Clip_\Delta()$ function in Alg. \ref{alg:FAA} never takes effect, then $\phi^{\xi}_{FAA}$ is feasible, and we have
	\begin{equation}
	J_\infty(\phi^{\xi}_{FAA}) \leq k\frac{|S||A|H}{1-\epsilon} + \frac{|A|}{1-\epsilon}\left[\frac{|A|}{\epsilon}\right]^kD_\epsilon,
	\end{equation}
\end{theorem}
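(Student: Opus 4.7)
The plan is to decompose $J_\infty(\phi^{\xi}_{FAA})$ into $k$ sequential phases, one per target state, and bound each phase's expected duration by induction on the phase index $i$. Phase $i$ begins after targets $(1),\ldots,(i-1)$ have been achieved (i.e., $Q_t$ enforces $\pi^\dagger$ with margin $\eta$ at those states) and ends once target $(i)$ is similarly achieved.

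First, I would establish an \emph{invariance lemma}: once FAA has driven $Q_t$ to satisfy the target-margin inequality at $s^\dagger_{(1)},\ldots,s^\dagger_{(i-1)}$, it preserves this for all subsequent $t$. Because $\mbox{Clip}_\Delta$ is assumed inactive, the greedy attack function $g(\cdot)$ overwrites $Q_{t+1}(s_t,a_t)$ exactly to the prescribed margin-$\eta$ configuration. The for-loop in the Attack routine ensures that whenever $s_t = s^\dagger_{(j)}$ for some $j \le i-1$ and the target there is threatened, the loop re-invokes $g$ with $A_{s_t} = \{\nu_j(s^\dagger_{(j)})\} = \{\pi^\dagger(s^\dagger_{(j)})\}$, re-establishing the margin on that visit. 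This lets me treat the phases as sequentially achieved with no regression.

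Next I would bound the expected length of phase $i$ by two components. The \emph{alignment cost}: since the greedy attack overwrites a single entry of $Q_t$ per visit, visiting each $(s,a)$ pair once suffices to steer the greedy policy to $\nu_i$ everywhere. For $\epsilon$-greedy exploration in an episodic MDP with episode length $H$, this takes $O(|S||A|H/(1-\epsilon))$ expected steps, and summing across $k$ phases gives the first term $k|S||A|H/(1-\epsilon)$. The \emph{hitting cost}: once $Q_t$ aligns with $\nu_i$, the $\epsilon$-greedy trajectory follows the navigation policy $\pi_{s^\dagger_{(i)}}$ at all non-target states and reaches $s^\dagger_{(i)}$ in $D_\epsilon$ expected steps absent obstacles. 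The obstacles are the preserved target states $s^\dagger_{(j)}$, $j<i$, where the greedy action is $\pi^\dagger(s^\dagger_{(j)})$ rather than the navigation action; crossing each such state en route requires an $\epsilon$-exploration into the navigation direction, of probability at least $\epsilon/|A|$. Bounding the number of such crossings per episode by $k$ yields per-episode success probability at least $(\epsilon/|A|)^k$, hence expected hitting time at most $(|A|/\epsilon)^k D_\epsilon$. An additional factor of $|A|/(1-\epsilon)$ arises from the agent then sampling an action in $\pi^\dagger(s^\dagger_{(i)})$ once $s^\dagger_{(i)}$ is reached so that the attacker can anchor the Q-value there. Adding the alignment and hitting contributions yields the claimed bound.

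The main obstacle I anticipate is making the hitting-cost argument rigorous: one must justify that the success events across the up-to-$k$ preserved-target crossings are lower-bounded by independent Bernoulli$(\epsilon/|A|)$ trials, and that after any failed attempt (or episode reset) the agent's residual $\epsilon$-distance to $s^\dagger_{(i)}$ remains at most $D_\epsilon$ so the same bound applies to the next attempt. The descending-distance ordering of target states by $\mathbb{E}_{s\sim\mu_0}[d_\epsilon(s,s^\dagger_{(i)})]$ is used here: farther targets are handled first, so navigation paths in later phases tend to terminate before revisiting preserved targets, but the worst-case bound must still absorb all $k$ potential crossings. A careful Markov-chain hitting-time argument, together with the invariance lemma to rule out regression, then ties the $(|A|/\epsilon)^k D_\epsilon$ term together and, combined with the alignment cost summed over phases, gives the stated inequality.
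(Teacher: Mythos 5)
Your plan follows essentially the same route as the paper's proof: a phase-per-target-state decomposition in descending-distance order, an invariance argument for already-achieved targets, a per-state teaching cost of $|A|/(1-\epsilon)$ visits (the paper's Lemma~2), and a navigation cost that accrues a factor of $|A|/\epsilon$ for each preserved target state the agent must cross via $\epsilon$-exploration. The one step you explicitly defer --- making the $\left[|A|/\epsilon\right]^{k} D_\epsilon$ hitting bound rigorous --- is exactly what the paper supplies, via an induction that compares the navigation policy respecting the first $i-1$ achieved targets to one that ignores the target at $s_1$, using a Bellman-equation comparison and a first-passage decomposition (visit $s_1$ before $s_{(i)}$ or not) to extract one $|A|/\epsilon$ factor per preserved target rather than your per-episode success-probability heuristic.
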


How large is $D_\epsilon$? For MDPs with underlying structure as undirected graphs, such as the grid worlds, it is shown that the expected hitting time of a uniform random walk is bounded by $O(|S|^2)$\cite{lawler1986expected}. Note that the random hitting time tightly upper bounds the optimal hitting time, a.k.a. the $\epsilon$-diameter $D_\epsilon$, and they match when $\epsilon = 1$. This immediately gives us the following result:

\begin{corollary} If in addition to the assumptions of Theorem \ref{delta4}, the maximal episode length $H = O(|S|)$, then $J_\infty(\phi^{\xi}_{FAA})\leq O(e^k|S|^2|A|)$ in Grid World environments. When the number of target states is small, i.e. $k \leq O(\log |S|)$, $J_\infty(\phi^{\xi}_{FAA})\leq O(\mbox{\emph{poly}}(|S|))$. \label{cor:gw}
\end{corollary}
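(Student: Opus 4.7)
The strategy is to take Theorem~\ref{delta4} as a black box and specialize its two problem-dependent quantities, the episode length $H$ and the $\epsilon$-diameter $D_\epsilon$, to the Grid World setting, then collect terms. Rewriting the bound of Theorem~\ref{delta4} with $\epsilon$ treated as a constant yields
\[
J_\infty(\phi^{\xi}_{FAA}) \;=\; O\!\bigl(k\,|S|\,|A|\,H\bigr) \;+\; O\!\bigl(|A|\cdot (|A|/\epsilon)^k\cdot D_\epsilon\bigr),
\]
so it suffices to bound $H$ and $D_\epsilon$ by $\mathrm{poly}(|S|)$ and then absorb the constant-base exponential $(|A|/\epsilon)^k$ into the (conventionally base-$e$) exponential factor $e^k$.

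The first term is handled immediately by the added hypothesis $H = O(|S|)$, giving a contribution of $O(k|S|^2|A|) \leq O(e^k|S|^2|A|)$ using $k \leq e^k$. For the second term, I would invoke the classical hitting-time bound of~\cite{lawler1986expected}: on any connected undirected graph with $|S|$ vertices, the expected hitting time of a simple uniform random walk between any pair of vertices is $O(|S|^2)$. A Grid World has exactly such an underlying undirected graph (moves are reversible between neighboring cells), and setting $\epsilon = 1$ turns the $\epsilon$-greedy behavior policy into the uniform random walk regardless of the base policy. Because $D_\epsilon = \max_{s,s'}\min_\pi \mathbb{E}_{\pi_\epsilon}[T_{s\to s'}]$ minimizes over $\pi$, the optimal navigation policy can only improve on uniform exploration---as noted in the text preceding the corollary---so $D_\epsilon \leq D_1 = O(|S|^2)$. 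Substituting back and writing $(|A|/\epsilon)^k = e^{k\log(|A|/\epsilon)}$, the second term becomes $O(e^k |S|^2 |A|)$ with the $\log(|A|/\epsilon)$ factor folded into the hidden constant inside the exponent. Adding the two contributions yields the first claim.

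The second claim follows by simply plugging $k \leq c\log|S|$ into $e^k \leq |S|^c$, so the final bound reduces to $O(|S|^{c+2}|A|) = O(\mathrm{poly}(|S|))$. I do not anticipate a serious obstacle: the result is essentially a plug-in consequence of Theorem~\ref{delta4}, and the one step that requires a little care is the argument that $D_\epsilon \leq D_1$, which rests on the freedom to choose the base policy $\pi$ inside the $\epsilon$-greedy wrapper and on the reversibility of Grid World dynamics that makes~\cite{lawler1986expected} directly applicable.
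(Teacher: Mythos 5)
Your proposal is correct and matches the paper's own (very brief) argument: the paper likewise obtains the corollary by plugging $H=O(|S|)$ and the random-walk hitting-time bound $D_\epsilon \leq O(|S|^2)$ from \cite{lawler1986expected} into the bound of Theorem~\ref{delta4}, absorbing the $(|A|/\epsilon)^k$ factor into $O(e^k)$ exactly as you do. The one step you flag as needing care, $D_\epsilon \leq D_1$, is asserted with the same level of informality in the paper itself, so there is no substantive difference.
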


\textbf{Remark 2:} Theorem \ref{delta4} and Corollary \ref{cor:gw} can be thought of as defining an implicit $\Delta_4$, such that for any $\Delta>\Delta_4$, the clip function in Alg. \ref{alg:FAA} never take effect, and $\phi^{\xi}_{FAA}$ achieves polynomial cost.

\subsection{Illustrating Attack (In)feasibility $\Delta$ Thresholds}
The theoretical results developed so far can be summarized as a diagram in Figure \ref{fig:numberline}.
We use the chain MDP in Figure~\ref{fig:running_example} to illustrate the four thresholds $\Delta_1, \Delta_2, \Delta_3, \Delta_4$ developed in this section.
On this MDP and with this attack target policy $\pi^\dagger$, we found that $\Delta_1 = \Delta_2 = 0.0069$.
The two matches because this $\pi^\dagger$ is the easiest to achieve in terms of having the smallest upperbound $\Delta_2$.
Attackers whose poison magnitude $|\delta_t| < \Delta_2$ will not be able to enforce the target policy $\pi^\dagger$ in the long run.

We found that $\Delta_3 = 0.132$.
We know that $\phi^{sas}_{\Delta_3}$ should be feasible if $\Delta>\Delta_3$.
To illustrate this, we ran $\phi^{sas}_{\Delta_3}$ with $\Delta = 0.2 > \Delta_3$ for 1000 trials and obtained estimated $J_{10^5}(\phi^{sas}_{\Delta_3}) = 9430$.
The fact that $J_{10^5}(\phi^{sas}_{\Delta_3}) \ll T=10^5$ is empirical evidence that $\phi^{sas}_{\Delta_3}$ is feasible.
We found that $\Delta_4 = 1$ by simulation. 
The adaptive attack $\phi^{\xi}_{FAA}$ constructed in Theorem \ref{delta4} should be feasible with $\Delta = \Delta_4 = 1$.
We run $\phi^{\xi}_{FAA}$  for 1000 trials and observed $J_{10^5}(\phi^{\xi}_{FAA}) = 30.4 \ll T$, again verifying the theorem. Also observe that $J_{10^5}(\phi^{\xi}_{FAA})$ is much smaller than $J_{10^5}(\phi^{sas}_{\Delta_3})$, verifying the foundamental difference in attack efficiency between the two attack policies as shown in Theorem \ref{delta3} and Corollary \ref{cor:gw}.

While FAA is able to force the target policy in polynomial time, it's not necessarily the optimal attack strategy. Next, we demonstrate how to solve for the optimal attack problem in practice, and empirically show that with the techniques from Deep Reinforcement Learning (DRL), we can find efficient attack policies in a variety of environments.

\section{Attack RL with RL}
\begin{figure}[t!]
	\centering
	\includegraphics[width=1.0\columnwidth]{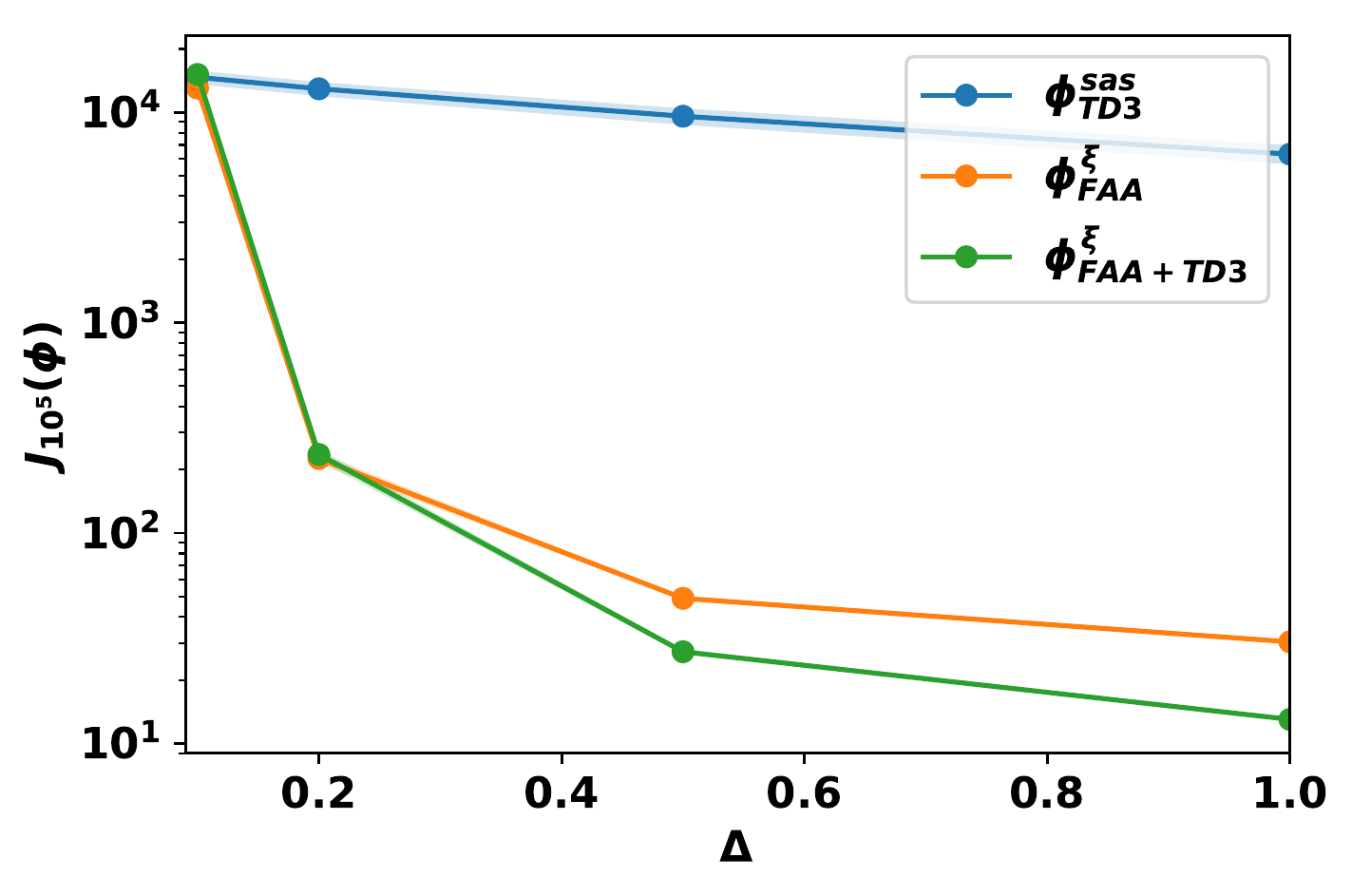}
	\caption{Attack cost $J_{10^5}(\phi)$ on different $\Delta$'s. Each curve shows mean $\pm 1$ standard error over 1000 independent test runs.}
	\label{fig:exp_deltas}
\end{figure}
The attack policies $\phi^{sas}_{\Delta_3}$ and $\phi^{\xi}_{FAA}$ were manually constructed for theoretical analysis.
Empirically, though, they do not have to be the most effective attacks under the relevant $\Delta$ constraint.

In this section, we present our key computational insight: the attacker can find an effective attack policy by relaxing the attack problem~\eqref{eq:conc_obj} so that the relaxed problem can be effectively solved with RL.
Concretely, consider the higher-level attack MDP $\mathcal N=(\Xi, \Delta, \rho, \tau)$ and the associated optimal control problem:
\begin{itemize}[leftmargin=*]
	\item 
	The attacker observes the attack state $\xi_t \in \Xi$.
	\item
	The attack action space is $\{\delta_t \in \R: |\delta_t| \leq \Delta\}$.
	\item 
	The original attack loss function $\mathbf{1}{[Q_t\notin \mathcal Q^\dagger]}$ is a 0-1 loss that is hard to optimize.
	We replace it with a continuous surrogate loss function $\rho$ that measures how close the current agent Q-table $Q_t$ is to the target Q-table set:
	\begin{equation}
	\label{eq:surrogate}
	\rho(\xi_t) =
	\sum_{s\in S^\dagger}\left[
	\max_{a\notin \pi^\dagger(s)} Q_t(s, a) 
	- \max_{a\in \pi^\dagger(s)} Q_t(s, a)  + \eta
	\right]_{+}
	\end{equation}
	where $\eta>0$ is a margin parameter to encourage that $\pi^\dagger(s)$ is strictly preferred over $A\backslash \pi^\dagger(s)$ with no ties.
	\item
	The attack state transition probability is defined by $\tau(\xi_{t+1} \mid \xi_t, \delta_t)$.
	Specifically, the new attack state $\xi_{t+1}=(s_{t+1}, a_{t+1}, s_{t+2}, r_{t+1}, Q_{t+1})$ is generated as follows:
	\begin{itemize}[leftmargin=*]
		\item $s_{t+1}$ is copied from $\xi_t$ if not the end of episode, else $s_{t+1}\sim \mu_0$.
		\item $a_{t+1}$ is the RL agent's exploration action drawn according to~\eqref{eq:explorationpolicy}, note it involves $Q_{t+1}$.
		\item $s_{t+2}$ is the RL agent's new state drawn according to the MDP transition probability $P(\cdot \mid s_{t+1}, a_{t+1})$.
		\item $r_{t+1}$ is the new (not yet poison) reward according to MDP $R(s_{t+1}, a_{t+1}, s_{t+2})$.
		\item The attack $\delta_t$ happens.  The RL agent updates $Q_{t+1}$ according to~\eqref{eq:Qlearning}.
	\end{itemize}
\end{itemize}
With the higher-level attack MDP $\mathcal N$, we relax the optimal attack problem~\eqref{eq:conc_obj} into
\begin{equation}
\phi^* = \argmin_{\phi} \mathbb E_{\phi} \sum_{t=0}^\infty\rho(\xi_t)
\label{eq:objective}
\end{equation}
One can now solve~\eqref{eq:objective} using Deep RL algorithms.  In this paper, we choose Twin Delayed DDPG (TD3)~\cite{fujimoto2018addressing}, a state-of-the-art algorithm for continuous action space.
We use the same set of hyperparameters for TD3 across all experiments, described in appendix \ref{sec:TD3params}.

\section{Experiments}
\begin{figure}[t]
	\centering
	\includegraphics[width=1.0 \columnwidth]{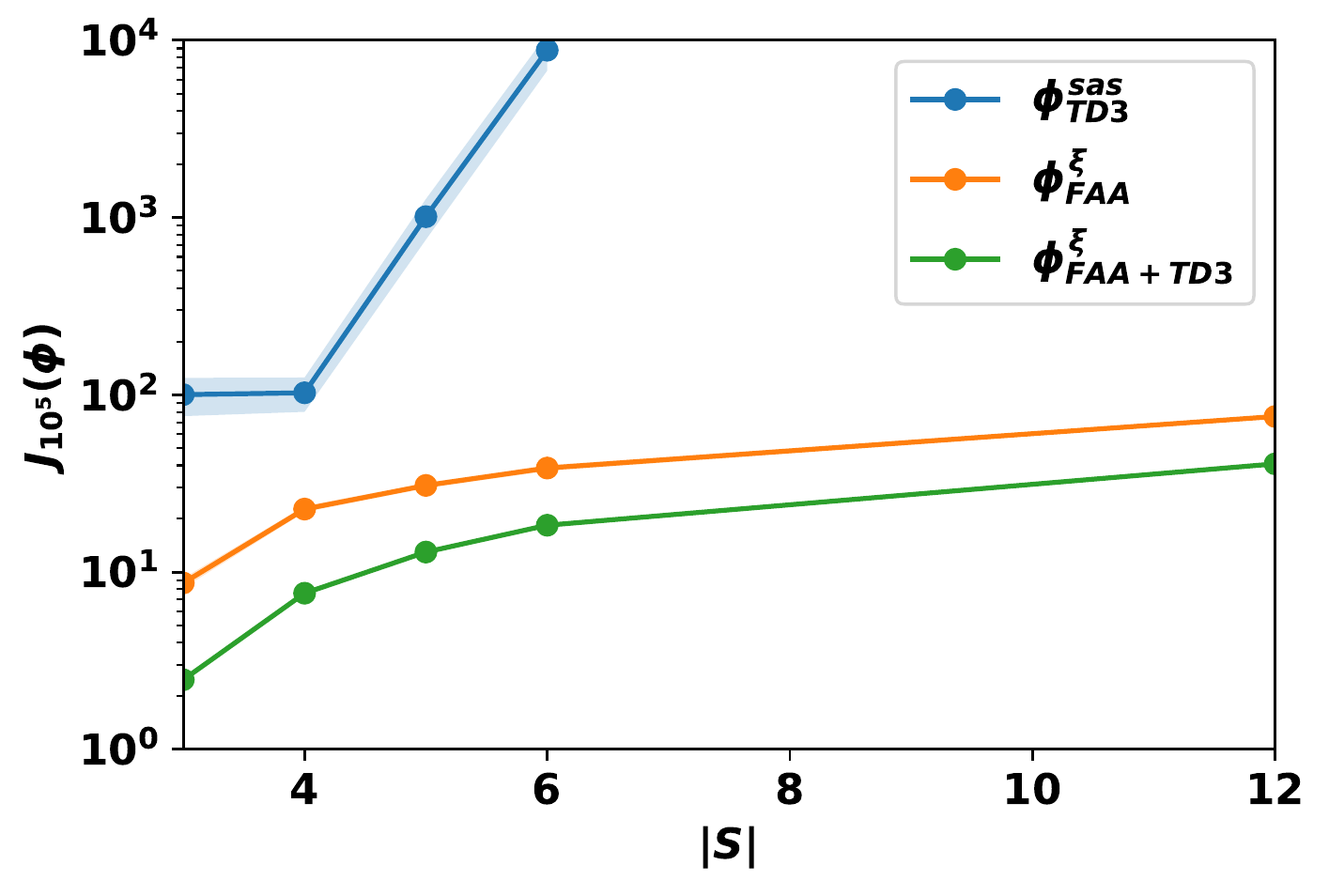}
	\caption{Attack performances on the chain MDPs of different lengths.  Each curve shows mean $\pm 1$ standard error over 1000 independent test runs.}
	\label{fig:exp_length}
\end{figure}
In this section, We make empirical comparisons between a number of attack policies $\phi$:
We use the naming convention where the superscript denotes non-adaptive or adaptive policy: $\phi^{sas}$ depends on $(s_t,a_t, s_{t+1})$ but not $Q_t$. Such policies have been extensively used in the reward shaping literature and prior work~\cite{ma2019policy, huang2019deceptive} on reward poisoning; $\phi^{\xi}$ depends on the whole attack state $\xi_t$. We use the subscript to denote how the policy is constructed.
Therefore, $\phi^{\xi}_{TD3}$ is the attack policy found by solving~\eqref{eq:objective} with TD3;
$\phi^{\xi}_{FAA+TD3}$ is the attack policy found by TD3 initialized from FAA (Algorithm~\ref{alg:FAA}), where TD3 learns to provide an additional $\delta_t'$ on top of the $\delta_t$ generated by $\phi^{\xi}_{FAA}$, and the agent receives $r_t+\delta_t+\delta_t'$ as reward;
$\phi^{sas}_{TD3}$ is the attack policy found using TD3 with the restriction that the attack policy only takes $(s_t,a_t,s_{t+1})$ as input.

In all of our experiments, we assume a standard Q-learning RL agent with parameters: $Q_0 = 0^{S\times A}$, $\epsilon = 0.1, \gamma=0.9, \alpha_t = 0.9, \forall t$. 
The plots show $\pm 1$ standard error around each curve (some are difficult to see).
We will often evaluate an attack policy $\phi$ using a Monte Carlo estimate of the 0-1 attack cost $J_T(\phi)$ for $T=10^5$, which approximates the objective $J_\infty(\phi)$ in~\eqref{eq:conc_obj}. 

\subsection{Efficiency of Attacks across different $\Delta$'s}

Recall that $\Delta>\Delta_3$, $\Delta>\Delta_4$ are sufficient conditions for manually-designed attack policies $\phi^{sas}_{\Delta_3}$ and $\phi^{\xi}_{FAA}$ to be feasible, but they are not necessary conditions. In this experiment, we empirically investigate the feasibilities and efficiency of non-adaptive and adaptive attacks across different $\Delta$ values.

We perform the experiments on the chain MDP in Figure~\ref{fig:running_example}. Recall that on this example, $\Delta_3 = 0.132$ and $\Delta_4 = 1$ (implicit). We evaluate across 4 different $\Delta$ values, $[0.1, 0.2, 0.5, 1]$, covering the range from $\Delta_3$ to $\Delta_4$. The result is shown in Figure \ref{fig:exp_deltas}.

We are able to make several interesting observations:\\
(1) All attacks are feasible ($y$-axis $\ll T$), even when $\Delta$ falls under the thresholds $\Delta_3$ and $\Delta_4$ for corresponding methods. This suggests that the feasibility thresholds are not tight.\\
(2) For non-adaptive attacks, as $\Delta$ increases the best-found attack policies $\phi^{sas}_{TD3}$ achieve small improvement, but generally incur a large attack cost.\\
(3) Adaptive attacks are very efficient when $\Delta$ is large. At $\Delta = 1$, the best adaptive attack $\phi^\xi_{FAA+TD3}$ achieves a cost of merely 13 (takes 13 steps to always force $\pi^\dagger$ on the RL agent). 
However, as $\Delta$ decreases the performance quickly degrades. 
At $\Delta = 0.1$ adaptive attacks are only as good as non-adaptive attacks. 
This shows an interesting transition region in $\Delta$ that our theoretical analysis does not cover.

\subsection{Adaptive Attacks are Faster}
\begin{figure}[t]
	\centering
	\includegraphics[width=.7\columnwidth]{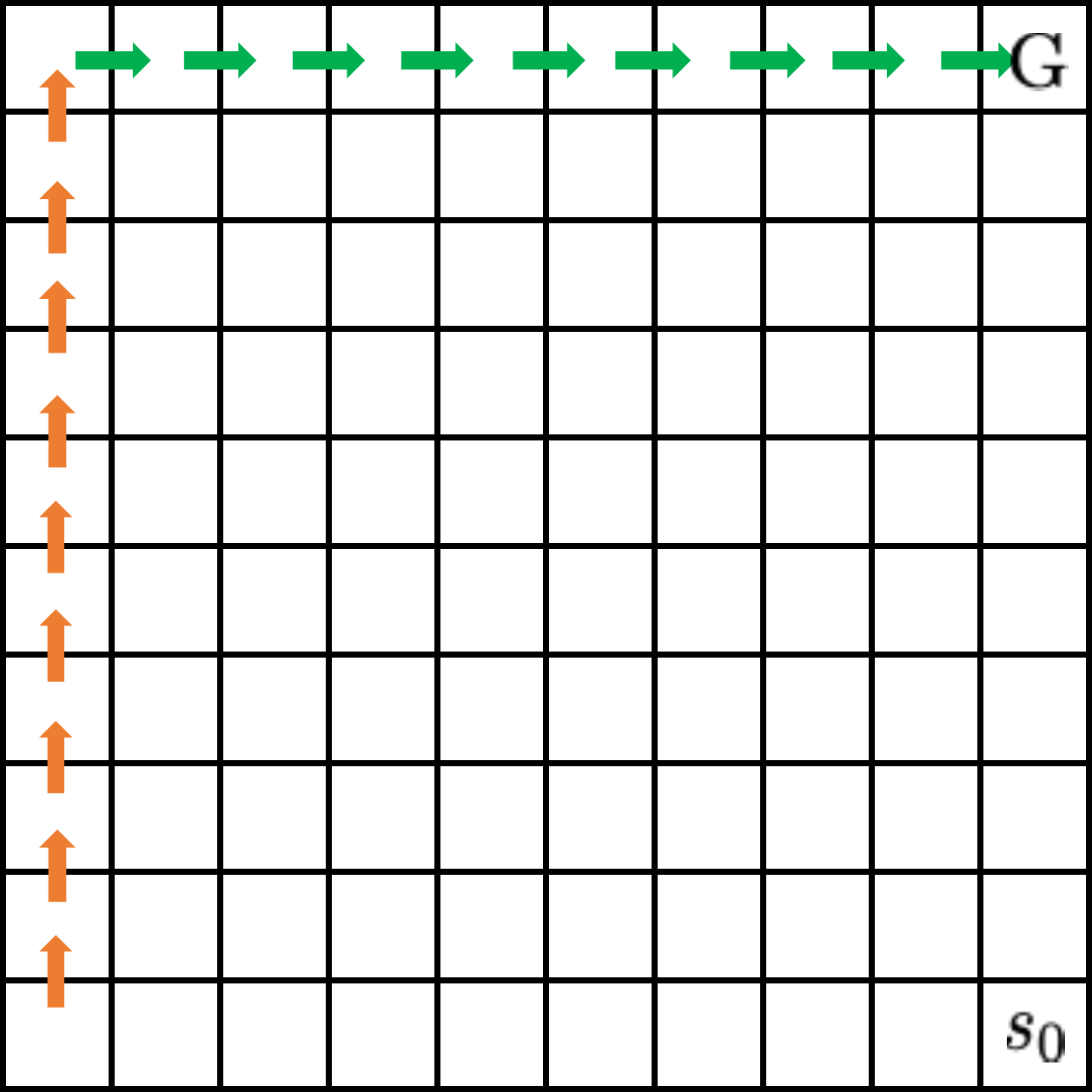}
	\caption{The $10\times 10$ Grid World. $s_0$ is the starting state and $G$ the terminal goal.
		Each move has a $-0.1$ negative reward, and a $+1$ reward for arriving at the goal.  
		We consider two partial target policies: $\pi^\dagger_1$ marked by the green arrows, and $\pi^\dagger_2$ by \emph{both} the green and the orange arrows.}
	\label{fig:10x10}
\end{figure}
In this experiment, we empirically verify that, while both are feasible, adaptive attacks indeed have an attack cost $O(\mbox{Poly}|S|)$ while non-adaptive attacks have $O(e^{|S|})$.
The 0-1 costs $1[\pi_t \neq \pi^\dagger]$ are in general incurred at the beginning of each $t=0\ldots T$ run.
In other words, adaptive attacks achieve $\pi^\dagger$ faster than non-adaptive attacks.
We use several chain MDPs similar to Figure~\ref{fig:running_example} but with increasing number of states $|S| = 3,4,5,6,12$.
We provide a large enough $\Delta = 2 \gg \Delta_4$ to ensure the feasibility of all attack policies. 
\begin{figure*}[ht!]
	\begin{subfigure}{0.24\textwidth}
		\centering
		\includegraphics[width=1\columnwidth]{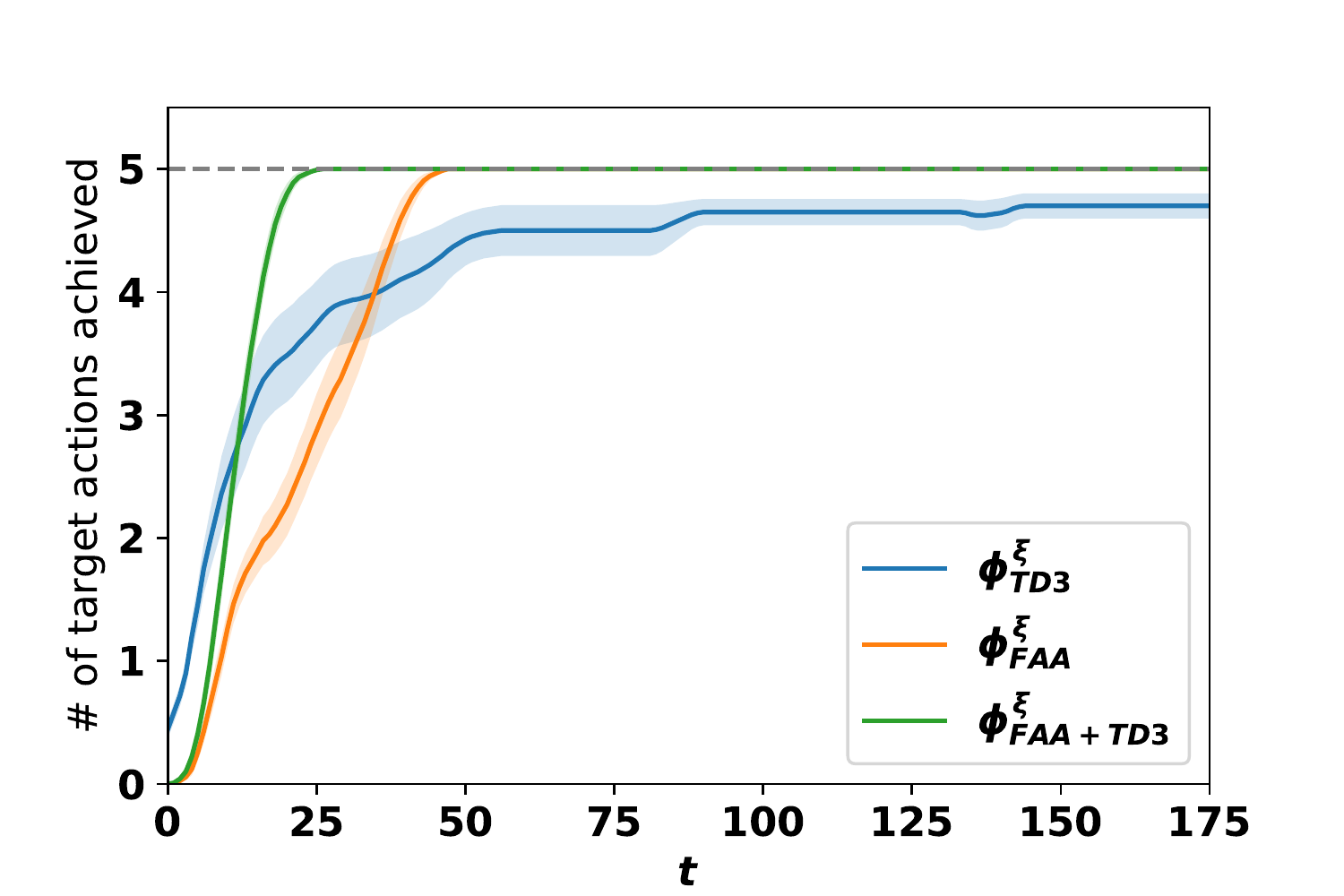}
		\caption{6-state chain MDP}
		\label{fig:ablation1}
	\end{subfigure}
	\begin{subfigure}{0.24\textwidth}
		\centering
		\includegraphics[width=1\columnwidth]{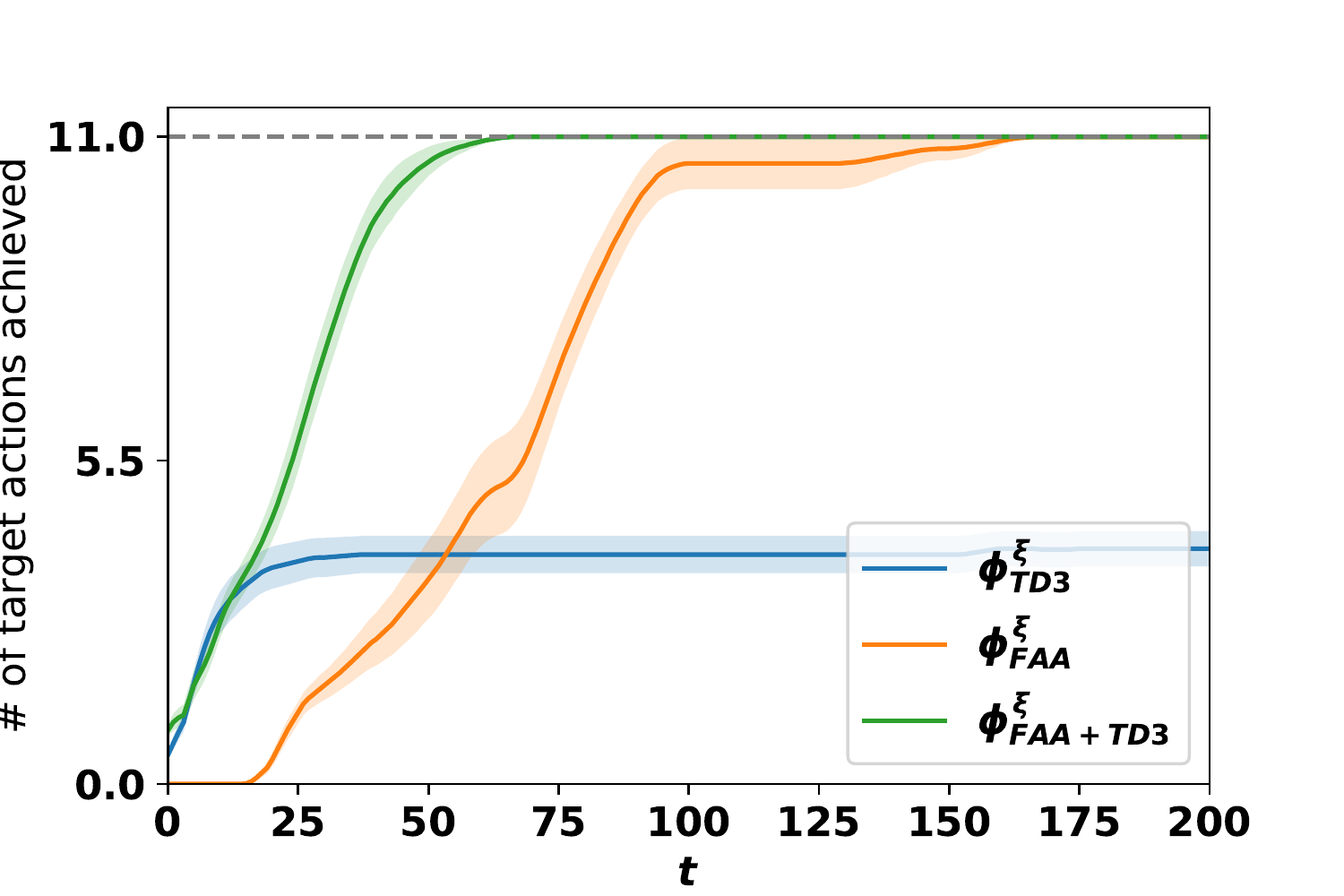}
		\caption{12-state chain MDP}
		\label{fig:ablation2}
	\end{subfigure}
	\begin{subfigure}{0.24\textwidth}
		\centering
		\includegraphics[width=1\columnwidth]{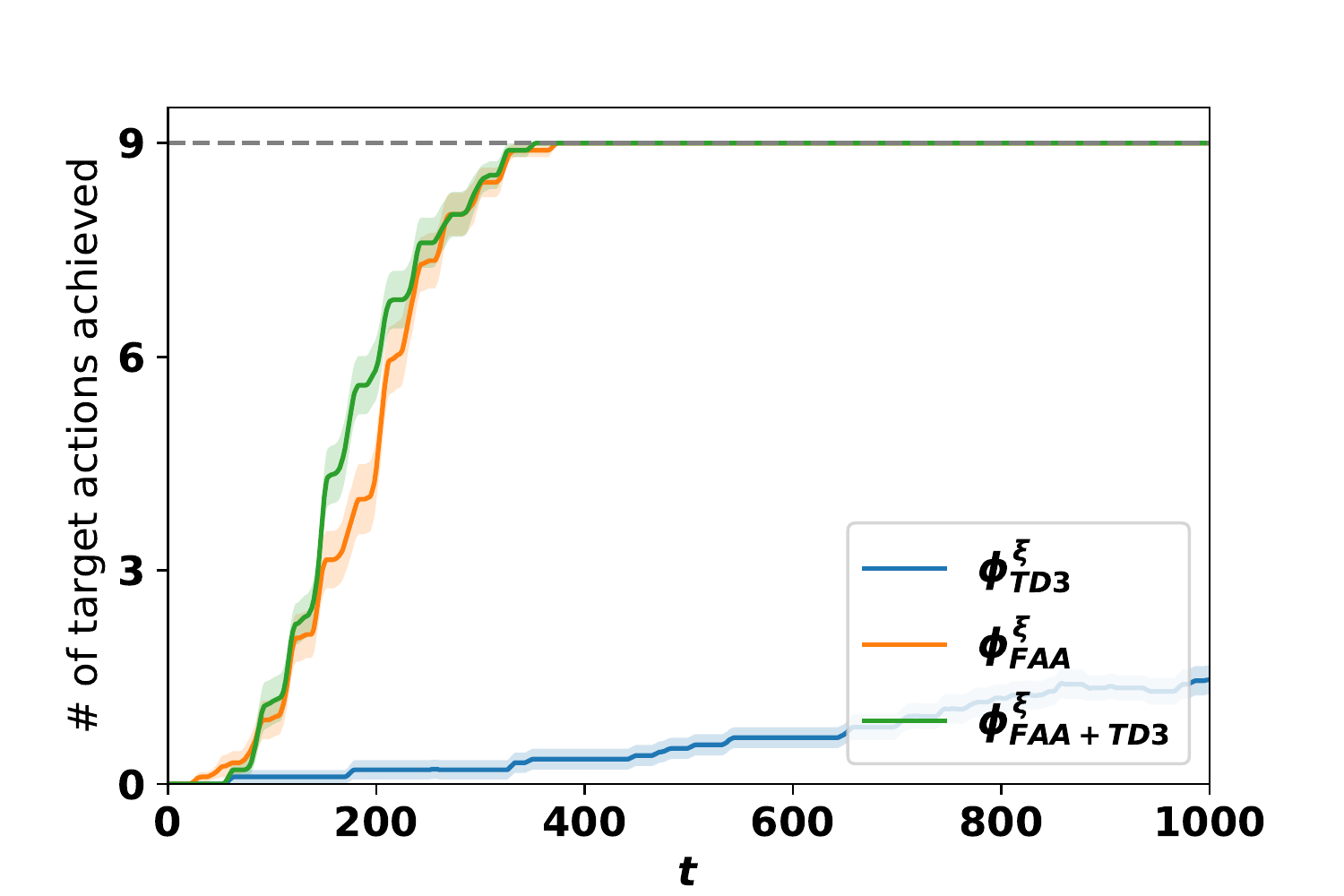}
		\caption{$10\times 10$ MDP with $\pi^\dagger_{1}$.}
		\label{fig:ablation3}
	\end{subfigure}
	\begin{subfigure}{0.24\textwidth}
		\centering
		\includegraphics[width=1\columnwidth]{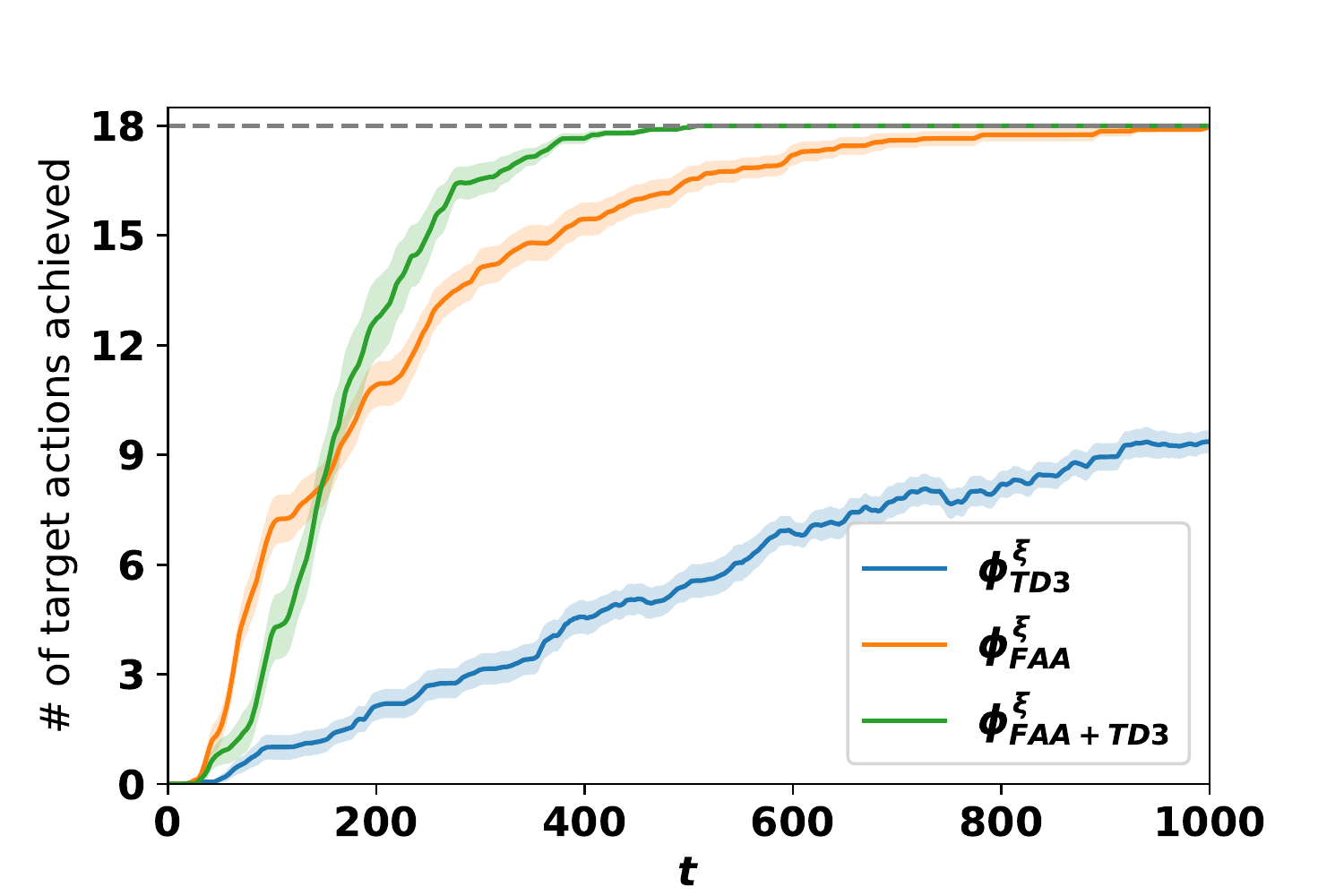}
		\caption{$10\times 10$ MDP with $\pi^\dagger_{2}$.}
		\label{fig:ablation4}
	\end{subfigure}
	\caption{Experiment results for the ablation study. Each curve shows mean $\pm 1$ standard error over 20 independent test runs. The gray dashed lines indicate the total number of target actions.}
	\label{fig:ablation}
\end{figure*}
The result is shown in Figure~\ref{fig:exp_length}.
The best-found non-adaptive attack $\phi^{sas}_{TD3}$ is approximately straight on the log-scale plot, suggesting attack cost $J$ growing exponentially with MDP size $|S|$. 
In contrast, the two adaptive attack polices $\phi^\xi_{FAA}$ and $\phi^\xi_{FAA+TD3}$ actually achieves attack cost linear in $|S|$.
This is not easy to see from this log-scaled plot; We reproduce Figure~\ref{fig:exp_length} without the log scale in the appendix \ref{sec:normal scale}, where the linear rate can be clearly verified. This suggests that the upperbound developed in Theorem \ref{delta4} and Corollary \ref{cor:gw} can be potentially improved.

\subsection{Ablation Study}
In this experiment, we compare three adaptive attack policies: $\phi^\xi_{TD3}$ the policy found by out-of-the-box TD3, $\phi^\xi_{FAA}$ the manually designed FAA policy, and $\phi^\xi_{FAA+TD3}$ the policy found by using FAA as initialization for TD3. 

We use three MDPs: a 6-state chain MDP, a 12-state chain MDP, and a $10\times 10$ grid world MDP..
The $10\times 10$ MDP has two separate target policies $\pi^\dagger_1$ and $\pi^\dagger_2$, see Figure~\ref{fig:10x10}.

For evaluation, we compute the number of target actions achieved $|\{s \in S^\dagger: \pi_t(s)\in\pi^\dagger(s)\}|$ as a function of $t$. This allows us to look more closely into the progress made by an attack.
The results are shown in Figure \ref{fig:ablation}.

First, observe that across all 4 experiments, attack policy $\phi^{\xi}_{TD3}$ found by out-of-the-box TD3 never succeeded in achieving all target actions.
This indicates that TD3 alone cannot produce an effective attack. We hypothesize that this is due to a lack of effective exploration scheme:
when the target states are sparse ($|S^\dagger| \ll |S|)$ it can be hard for TD3 equiped with Gaussian exploration noise to locate all target states. As a result, the attack policy found by vanilla TD3 is only able to achieve the target actions on a subset of frequently visited target states.

Hand-crafted $\phi^\xi_{FAA}$ is effective in achieving the target policies, as is guaranteed by our theory. Nevertheless, we found that $\phi^\xi_{FAA+TD3}$ always improves upon $\phi^\xi_{TD3}$.
Recall that we use FAA as the initialization and then run TD3.
This indicates that TD3 can be highly effective with a good initialization, which effectively serves as the initial exploration policy that allows TD3 to locate all the target states.

Of special interest are the two experiments on the $10\times 10$ Grid World with different target policies.
Conceptually, the advantage of the adaptive attack is that the attacker can perform explicit navigation to lure the agent into the target states. An efficient navigation policy that leads the agent to all target states will make the attack very efficient.
Observe that in Figure \ref{fig:10x10}, both target polices form a chain, so that if the agent starts at \emph{the beginning of the chain}, the target actions naturally lead the agent to the subsequent target states, achieving efficient navigation.

Recall that the FAA algorithm prioritizes the target states farthest to the starting state. In the $10\times 10$ Grid World, the farthest state is the top-left grid. For target states $S^\dagger_1$, the top-left grid turns out to be the beginning of the \emph{target chain}. As a result, $\phi^\xi_{FAA}$ is already very efficient, and $\phi^\xi_{FAA+TD3}$ couldn't achieve much improvement, as shown in \ref{fig:ablation3}. On the other hand, for target states $S^\dagger_2$, the top-left grid is in the middle of the target chain, which makes $\phi^\xi_{FAA}$ not as efficient. In this case, $\phi^\xi_{FAA+TD3}$ makes a significant improvement, successfully forcing the target policy in about 500 steps, whereas it takes $\phi^\xi_{FAA}$ as many as 1000 steps, about twice as long as $\phi^\xi_{FAA+TD3}$.

\section{Conclusion}
In this paper, we studied the problem of reward-poisoning attacks against reinforcement-learning agents. Theoretically, we provide robustness certificates that guarantee the truthfulness of the learned policy when the attacker's constraint is stringent. When the constraint is loose, we show that by being adaptive to the agent's internal state, the attacker can force the target policy in polynomial time, whereas a naive non-adaptive attack takes exponential time.
Empirically, we formulate that the reward poisoning problem as an optimal control problem on a higher-level attack MDP, and developed computational tools based on DRL that is able to find efficient attack policies across a variety of environments.
\section*{Acknowledgments}
This work is supported in part by NSF 1545481, 1623605, 1704117, 1836978 and the MADLab AF Center of Excellence FA9550-18-1-0166.
\newpage
\bibliography{online_attack_on_RL}
\bibliographystyle{icml2020}

\clearpage
\newpage
\onecolumn
\appendix
\appendixpage
\section{Proof of Theorem \ref{delta12}}
\label{sec:delta12proof}
\begin{proof}
	Consider two MDPs with reward functions defined as $R+\Delta$ and $R-\Delta$, denote the Q table corresponding to them as $Q_{+\Delta}$ and $Q_{-\Delta}$, respectively. Let $\{(s_t,a_t)\}$ be any instantiated trajectory of the learner corresponding to the attack policy $\phi$. By assumption, $\{(s_t,a_t)\}$ visits all $(s,a)$ pairs infinitely often and $\alpha_t$'s satisfy $\sum \alpha_t=\infty$ and $\sum \alpha_t^2<\infty$. Assuming now that we apply Q-learning on this particular trajectory with reward given by $r_t + \Delta$, standard Q-learning convergence applies and we have that $Q_{t,+ \Delta}\rightarrow Q_{+\Delta}$ and similarly, $Q_{t,- \Delta}\rightarrow Q_{-\Delta}$ \cite{melo2001convergence}.
	
	Next, we want to show that $Q_t(s,a)\leq Q_{t,+ \Delta}(s,a)$ for all $s\in S, a\in A$ and for all $t$. We prove by induction. First, we know $Q_0(s,a) = Q_{0,+ \Delta}(s,a)$. Now, assume that $Q_k(s,a)\leq Q_{k,+ \Delta}(s,a)$. We have
	\begin{eqnarray}
	&&Q_{k+1,+ \Delta}(s_{k+1},a_{k+1})\\
	&=& (1-\alpha_{k+1})Q_{k,+ \Delta}(s_{k+1},a_{k+1}) + \alpha_{k+1}\left(r_{k+1} + \Delta + \gamma \max_{a' \in A} Q_{k,+ \Delta}(s'_{k+1}, a')\right)\\
	&\geq&(1-\alpha_{k+1})Q_{k}(s_{k+1},a_{k+1}) +\alpha_{k+1}\left(r_{k+1} + \delta_{k+1} + \gamma \max_{a' \in A} Q_k(s'_{k+1}, a')\right)\\
	&=& Q_{k+1}(s_{k+1},a_{k+1}),
	\end{eqnarray}
	which established the induction. Similarly, we have $Q_t(s,a)\geq Q_{t,- \Delta}(s,a)$. Since $Q_{t,+ \Delta}\rightarrow Q_{+\Delta}$, $Q_{t,-\Delta}\rightarrow Q_{-\Delta}$, we have that for large enough $t$,
	\begin{eqnarray}
	Q_{-\Delta}(s,a)\leq Q_t(s,a) \leq Q_{+\Delta}, \forall s\in S,a\in A.
	\end{eqnarray}
	Finally, it's not hard to see that $Q_{+\Delta}(s,a) = Q^*(s,a) + \frac{\Delta}{1-\gamma}$ and $Q_{-\Delta}(s,a) = Q^*(s,a) - \frac{\Delta}{1-\gamma}$. This concludes the proof.
\end{proof}

\section{Proof of Theorem \ref{delta3}}
\label{sec:delta3proof}
\begin{proof}
	We provide a constructive proof. We first design an attack policy $\phi$, and then show that $\phi$ is a \textit{strong attack}.
	For the purpose of finding a strong attack, it suffices to restrict the constructed $\phi$ to depend only on $(s,a)$ pairs, which is a special case of our general attack setting. Specifically, for any $\Delta>\Delta_3$, we define the following $Q^\prime$:
	\begin{equation}
	Q^\prime(s,a)=\left\{
	\begin{aligned}
	&Q^*(s,a)+\frac{\Delta}{(1+\gamma)}, &&\forall s\in S^\dagger, a\in\pi^\dagger(s),\\
	&Q^*(s,a)-\frac{\Delta}{(1+\gamma)}, &&\forall s\in S^\dagger, a\notin \pi^\dagger(s),\\
	& Q^*(s,a), \forall s\notin S^\dagger, a,
	\end{aligned}
	\right.
	\end{equation}
	where $Q^*(s,a)$ is the original optimal value function without attack.
	We will show $Q^\prime\in\Q^\dagger$, i.e., the constructed $Q^\prime$ induces the target policy. For any $s\in S^\dagger$, let $a^\dagger\in\argmax_{a\in \pi^\dagger(s)}Q^*(s,a)$, a best target action desired by the attacker under the original value function $Q^*$. We next show that $a^\dagger$ becomes the optimal action under $Q^\prime$. Specifically, $\forall a^\prime\notin \pi^\dagger(s)$, we have
	\begin{eqnarray}
	Q^\prime(s,a^\dagger)&=&Q^*(s,a^\dagger)+\frac{\Delta}{(1+\gamma)}\\
	&=&Q^*(s,a^\dagger)-Q^*(s,a^\prime)+\frac{2\Delta}{(1+\gamma)}+Q^*(s,a^\prime)-\frac{\Delta}{(1+\gamma)}\\
	&=&Q^*(s,a^\dagger)-Q^*(s,a^\prime)+\frac{2\Delta}{(1+\gamma)}+Q^\prime(s,a^\prime),
	\end{eqnarray}
	Next note that
	\begin{eqnarray}
	\Delta>\Delta_3&\ge& \frac{1+\gamma}{2}[\max_{a\notin \pi^\dagger(s)}Q^*(s,a)- \max_{a\in \pi^\dagger(s)}Q^*(s,a)]\\
	&=&\frac{1+\gamma}{2}[\max_{a\notin \pi^\dagger(s)}Q^*(s,a)- Q^*(s,a^\dagger)]\\
	&\ge& \frac{1+\gamma}{2}[Q^*(s,a^\prime)- Q^*(s,a^\dagger)],
	\end{eqnarray}
	which is equivalent to
	\begin{equation}
	Q^*(s,a^\dagger) -Q^*(s,a^\prime)> - \frac{2\Delta}{1+\gamma},
	\end{equation}
	thus we have

	\begin{eqnarray}
	Q^\prime(s,a^\dagger)&=&Q^*(s,a^\dagger)-Q^*(s,a^\prime)+\frac{2\Delta}{(1+\gamma)}+Q^\prime(s,a^\prime)\\
	&>& 0+Q^\prime(s,a^\prime)=Q^\prime(s,a^\prime).
	\end{eqnarray}
	This shows that under $Q^\prime$, the original best target action $a^\dagger$ becomes better than all non-target actions, thus $a^\dagger$ is optimal and $Q^\prime\in\Q^\dagger$. According to Proposition 4 in~\cite{ma2019policy}, the Bellman optimality equation induces a unique reward function $R^\prime(s,a)$ corresponding to $Q^\prime$:
	\begin{equation}
	R^\prime(s,a) = Q^\prime(s,a) - \gamma\sum_{s^\prime}P(s^\prime\mid s, a)\max _{a^\prime} Q^\prime(s^\prime, a^\prime).
	\end{equation}
	We then construct our attack policy $\phi^{sas}_{\Delta_3}$ as:
	\begin{equation}
	\phi^{sas}_{\Delta_3}(s,a)=R^\prime(s,a) - R(s,a), \forall s,a.
	\end{equation}
	The $\phi^{sas}_{\Delta_3}(s,a)$ results in that the reward function after attack appears to be $R^\prime(s,a)$ from the learner's perspective. This in turn guarantees that the learner will eventually learn $Q^\prime$, which achieves the target policy. Next we show that under $\phi^{sas}_{\Delta_3}(s,a)$, the objective value~\eqref{eq:conc_obj} is finite, thus the attack is feasible. To prove feasibility, we consider adapting Theorem 4 in~\cite{even2003learning}, re-stated as below.
	\begin{lemma}[Even-Dar \& Mansour]
	Assume the attack is $\phi^{sas}_{\Delta_3}(s,a)$ and let $Q_t$ be the value of the Q-learning algorithm using polynomial learning rate $\alpha_t = (\frac{1}{1+t})^\omega$ where $\omega\in(\frac{1}{2},1]$. Then with probability at least $1-\delta$, we have $\|Q_T-Q^\prime\|_\infty\le \tau$ with
	\begin{equation}
	T=\Omega\left(L^{3+\frac{1}{\omega}}\frac{1}{\tau^2}(\ln\frac{1}{\delta\tau})^{\frac{1}{\omega}}+L^{\frac{1}{1-\omega}}\ln\frac{1}{\tau}\right),
	\end{equation}
	\end{lemma}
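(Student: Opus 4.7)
The key observation is that under the attack $\phi^{sas}_{\Delta_3}$ the learner effectively faces a stationary stochastic environment statistically identical to Q-learning on a modified MDP $\mathcal{M}'=(S,A,R',P,\mu_0)$. My first step is to formalize this reduction: since $\phi^{sas}_{\Delta_3}(s,a)$ depends only on $(s,a)$, the corrupted reward $r_t+\delta_t$ has conditional mean $R'(s_t,a_t)$ given $(s_t,a_t)$, with zero-mean bounded noise inherited from the original stochastic reward $R(s,a,s')$. The Q-learning update in \eqnref{eq:Qlearning} driven by the poisoned reward therefore coincides in distribution with standard Q-learning on $\mathcal{M}'$ under the same $\epsilon$-greedy exploration.

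Next I would verify that the convergence target is $Q'$ itself. By construction $R'$ was derived from $Q'$ via the Bellman equation $R'(s,a)=Q'(s,a)-\gamma\,\E{P(s'\mid s,a)}{\max_{a'}Q'(s',a')}$, so $Q'$ is a fixed point of the Bellman optimality operator of $\mathcal{M}'$; uniqueness follows from the usual $\gamma$-contraction argument for $\gamma<1$. The lemma then reduces to a direct invocation of Even-Dar \& Mansour's polynomial-learning-rate convergence theorem on $\mathcal{M}'$: I would check that per-step rewards remain bounded (from $|r_t|\le\|R\|_\infty$ and $|\delta_t|\le\Delta$), that the polynomial stepsize $\alpha_t=(1+t)^{-\omega}$ with $\omega\in(1/2,1]$ is assumed by hypothesis, and that the covering number $L$ of $\mathcal{M}$ under $\epsilon$-greedy transfers to $\mathcal{M}'$ unchanged because the sampling policy and the transition kernel are untouched by a reward-only attack. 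Substituting into their bound gives the claimed $T$.

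The main obstacle I anticipate is that the Even-Dar \& Mansour theorem is typically phrased for an exogenous sampling policy, whereas here the $\epsilon$-greedy behavior policy evolves with $Q_t$ and is thus coupled to the learning process. My plan is to observe that their proof only requires every $(s,a)$ pair to be revisited within $L$ steps with uniformly positive probability, a property that holds for $\epsilon$-greedy regardless of the current $Q_t$. Concretely, I would decouple the two sources of randomness by conditioning on the realized trajectory and applying their Azuma-type concentration bound to the reward-noise martingale alone; the contraction-based bias analysis then proceeds verbatim, yielding the $\|Q_T-Q'\|_\infty\le\tau$ guarantee with the stated sample-complexity rate.
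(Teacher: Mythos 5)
Your proposal is correct and takes essentially the same route as the paper: the paper does not prove this lemma from scratch but states it as an adaptation of Theorem 4 of Even-Dar \& Mansour, relying on exactly the reduction you describe---under $\phi^{sas}_{\Delta_3}$ the learner perceives the stationary reward $R^\prime$, whose induced optimal Q-table is $Q^\prime$ by construction, so the cited convergence rate applies with the same covering time $L$. Your additional care about the $\epsilon$-greedy policy being coupled to $Q_t$ is a reasonable sanity check, but it is already absorbed into the covering-time hypothesis that both you and the paper invoke.
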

	Note that $Q^\dagger$ is an open set and $Q^\prime\in\Q^\dagger$. This implies that one can pick a small enough $\tau_0>0$ such that $\|Q_T-Q^\prime\|_\infty\le\tau_0$ implies $Q_T\in Q^\dagger$.	 From now on we fix this $\tau_0$, thus the bound in the above theorem becomes 
	\begin{equation}
	T=\Omega\left(L^{3+\frac{1}{\omega}}(\ln\frac{1}{\delta})^{\frac{1}{\omega}}+L^{\frac{1}{1-\omega}}\right).
	\end{equation}
	As the authors pointed out in~\cite{even2003learning}, the $\omega$ that leads to the tightest lower bound on $T$ is around 0.77. Here for our purpose of proving feasibility, it is simpler to let $\omega\approx \frac{1}{2}$ to obtain a loose lower bound on $T$ as below
	\begin{equation}
	T = \Omega\left(L^5(\ln\frac{1}{\delta})^2\right).
	\end{equation}
	Now we represent $\delta$ as a function of $T$ to obtain that $\forall T>0$,
	\begin{equation}
	P[\|Q_T-Q^\prime\|_\infty>\tau_0]\le C\exp(-L^{-\frac{5}{2}}T^{\frac{1}{2}}).
	\end{equation}
	Let $e_t=\ind{\|Q_t-Q^\prime\|_\infty>\tau_0}$, then we have
	\begin{eqnarray}
	\E{\phi^{sas}_{\Delta_3}}{\sum_{t=1}^{\infty} \mathbf{1}[{Q_t \notin \mathcal Q^\dagger}]}&\le & \E{\phi^{sas}_{\Delta_3}}{\sum_{t=1}^{\infty} e_t}\\
	&=&\sum_{t=1}^\infty P[\|Q_T-Q^\prime\|_\infty>\tau_0]\le\sum_{t=1}^\infty C\exp(-L^{-\frac{5}{2}}t^{\frac{1}{2}})\\
	&\le& \int_{t=0}^\infty C\exp(-L^{-\frac{5}{2}}t^{\frac{1}{2}})dt=2C L^5,
	\end{eqnarray}
	which is finite. Therefore the attack is feasible.

	It remains to validate that $\phi^{sas}_{\Delta_3}$ is a legitimate attack, i.e., $|\delta_t|\le\Delta$ under attack policy $\phi^{sas}_{\Delta_3}$. By Lemma 7 in~\cite{ma2019policy}, we have
	\begin{eqnarray}
	|\delta_t|&=&|R^\prime(s_t,a_t)-R(s_t,a_t)|\\
	&\le&\max_{s,a}[R^\prime(s,a)-R(s,a)]=\|R^\prime-R\|_\infty\\
	&\le& (1+\gamma)\|Q^\prime-Q^*\|=(1+\gamma)\frac{\Delta}{(1+\gamma)}=\Delta.
	\end{eqnarray}
	Therefore the attack policy $\phi^{sas}_{\Delta_3}$ is valid.
\end{proof}

\paragraph{Discussion on a number of non-adaptive attacks:}
Here, we discuss and contrast 3 non-adaptive attack polices developed in this and prior work:
\begin{enumerate}[leftmargin=*, nolistsep]
	\item \cite{huang2019deceptive} produces the non-adaptive attack that is feasible with the smallest $\Delta$. In particular, it solves for the following optimization problem:
	\begin{align}
	\min_{\delta, Q\in \R^{S\times A}}& \|\delta\|_\infty\\
	\mbox{s.t. }& Q(s,a) = \delta(s,a) + \E{P(s'|s,a)}{R(s,a,s) + \gamma \max_{a' \in A} Q(s',a')}\\
	& Q \in \mathcal{Q}^\dagger
	\end{align}
	where the optimal objective value implicitly defines a $\Delta_3'<\Delta_3$. However, it's a fixed policy independent of the actual $\Delta$ . In other word, It's either feasible if $\Delta>\Delta_3'$, or not.
	
	\item $\phi^{sas}_{\Delta_3}$ is a closed-form non-adaptive attack that depends on $\Delta$. $\phi^{sas}_{\Delta_3}$ is guaranteed to be feasible when $\Delta>\Delta_3$. However, this is sufficient but not necessary. Implicitly, there exists a $\Delta_3''$ which is the necessary condition for the feasibility of $\phi^{sas}_{\Delta_3}$. Then, we know $\Delta_3'' > \Delta_3'$, because $\Delta_3'$ is the sufficient and necessary condition for the feasibility of any non-adaptive attacks, whereas $\Delta_3''$ is the condition for the feasibility of non-adaptive attacks of the specific form constructed above.
	
	\item $\phi^{sas}_{TD3}$ (assume perfect optimization) produces the most efficient non-adaptive attack that depends on $\Delta$.
\end{enumerate}
	In terms of efficiency, $\phi^{sas}_{TD3}$ achieves smaller $J_\infty(\phi)$ than $\phi^{sas}_{\Delta_3}$ and \cite{huang2019deceptive}. It's not clear between $\phi^{sas}_{\Delta_3}$ and \cite{huang2019deceptive} which one is better. We believe that in most cases, especially when $\Delta$ is large and learning rate $\alpha_t$ is small, $\phi^{sas}_{\Delta_3}$ will be faster, because it takes advantage of that large $\Delta$, whereas \cite{huang2019deceptive} does not. But there probably exist counterexamples on which \cite{huang2019deceptive} is faster than $\phi^{sas}_{\Delta_3}$.
	
\section{The Covering Time $L$ is $O(\exp(|S|))$ for the chain MDP}
\label{sec:coveringtimeproof}
\begin{proof}
	While the $\epsilon$-greedy exploration policy constantly change according to the agent's current policy $\pi_t$, since $L$ is a uniform upper bound over the whole sequence, and we know that $\pi_t$ will eventually converge to $\pi^\dagger$, it suffice to show that the covering time under $\pi^\dagger_\epsilon$ is $O(\exp(|S|))$.
	
	Recall that $\pi^\dagger$ prefers going right in all but the left most grid. The covering time in this case is equivalent to the expected number of steps taken for the agent to get from $s_0$ to the left-most grid, because to get there, the agent necessarily visited all states along the way. Denote the non-absorbing states from right to left as $s_0, s_1, ..., s_{n-1}$, with $|S| = n$. Denote $V_k$ the expected steps to get from state $s_k$ to $s_{n-1}$. Then, we have the following recursive relation:
	\begin{eqnarray}
	V_{n-1} &=& 0\\
	V_{k} &=& 1 + (1-\frac{\epsilon}{2}) V_{k-1} + \frac{\epsilon}{2} V_{k+1}, \mbox{for } k = 1,...,n-2\\
	V_{0} &=& 1 + (1-\frac{\epsilon}{2}) V_{0} + \frac{\epsilon}{2} V_{1}
	\end{eqnarray}
	Solving the recursive gives
	\begin{equation}
	V_0 = \frac{p(1+p(1-2p))}{(1-2p)^2}\left[(\frac{1-p}{p})^{n-1}-1\right]
	\end{equation}
	where $p = \frac{\epsilon}{2}<\frac{1}{2}$ and thus $V_0 = O(\exp(n))$.
\end{proof}

\section{Proof of Theorem \ref{delta4}}
\label{sec:delta4proof}
\begin{lemma} For any state $s\in S$ and target actions $A(s)\subset A$, it takes FAA at most $\frac{|A|}{1-\epsilon}$ visits to $s$ in expectation to enforce the target actions $A(s)$.
\end{lemma}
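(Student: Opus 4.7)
The plan is two-fold: first show that a single visit to $s$ on which the learner happens to play some $a\in A(s)$ permanently enforces the target at $s$; then bound the expected number of visits to $s$ before that event via a potential-function argument. For the first part, inspect the $a_t\in A(s)$ branch of the greedy attack $g(A(s))$ in Alg.~\ref{alg:FAA}: assuming $\mbox{Clip}_\Delta$ is inactive (the standing hypothesis of \thmref{delta4}), the attack produces $Q_{t+1}(s,a_t)=\max_{a\notin A(s)}Q_t(s,a)+\eta$. Because Q-learning only updates the entry of the played state-action pair, every other entry of $Q_t(s,\cdot)$ is unchanged, so $\argmax_a Q_{t+1}(s,a)\in A(s)$ immediately, and this is preserved forever: a later boost on some $a'\in A(s)$ lifts $Q(s,a')$ above all non-targets, while a later floor on some $a'\notin A(s)$ depresses $Q(s,a')$ below all targets.

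For the second part, let $M:=\max_{a\in A(s)}Q_0(s,a)$ and define the ``dangerous'' set $B(t):=\{a\notin A(s):Q_t(s,a)>M\}$, so $|B(0)|\le |A|-|A(s)|$. Before success the target entries of $Q(s,\cdot)$ are frozen (Q-learning touches only the played entry), hence $\max_{a\in A(s)}Q_t(s,a)\equiv M$. Any visit to $s$ with $a_t\notin A(s)$ triggers the floor branch of $g$, producing $Q_{t+1}(s,a_t)\le M-\eta$; unvisited entries are untouched, and any revisit of $(s,a_t)$ re-floors, so once an action leaves $B$ it cannot return. Consequently $|B(t)|$ is nonincreasing and strictly decreases whenever the played action lies in $B(t)$. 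In particular, whenever $B(t)\ne\emptyset$ the argmax at $s$ lies in $B(t)$, so $\epsilon$-greedy selects it with probability at least $1-\epsilon$; once $B(t)=\emptyset$ the argmax has moved into $A(s)$, and $\epsilon$-greedy yields success with probability at least $1-\epsilon$.

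Combining these pieces in a hitting-time calculation on the potential $|B|$: the expected number of visits required to drive $|B|$ from $|B(0)|$ down to $0$ is at most $(|A|-|A(s)|)/(1-\epsilon)$ (each decrement takes at most $1/(1-\epsilon)$ expected visits, by linearity), and after $B$ empties, at most a further $1/(1-\epsilon)$ expected visits produce success. Summing gives $(|A|-|A(s)|+1)/(1-\epsilon)\le |A|/(1-\epsilon)$ since $|A(s)|\ge 1$, which is the claimed bound.

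The main obstacle is careful bookkeeping rather than any deep argument: one must verify that $M$ genuinely stays frozen before success, that floored non-target actions cannot slip back into $B$, and that the floor branch of $g$ actually attains the full $-\eta$ margin separation. All three follow from the two basic facts that Q-learning modifies only the visited entry and that $\mbox{Clip}_\Delta$ never binds, so the proof reduces to a clean geometric-hitting-time computation on the potential $|B(t)|$.
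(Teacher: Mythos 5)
Your argument is correct and is essentially the paper's own proof in different clothing: the paper tracks the rank of the best target action among all actions at $s$ and derives the recursion $V_t \le V_{t-1} + \frac{1}{1-\epsilon}$, which is exactly your geometric hitting-time computation on the potential $|B(t)|$ (the rank is $|B(t)|+1$). One cosmetic caveat: with ties at exactly $M$ (e.g.\ $Q_0\equiv 0$) the argmax need not lie in $A(s)$ once $B(t)=\emptyset$, so you should define $B(t)$ with a non-strict inequality; the final bound is unaffected.
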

\begin{proof}
	Denote $V_t$ the expected number of visits $s$ to teach $A(s)$ given that under the current $Q_t$, $\max_{a \in A(s)}$ is ranked $t$ among all actions, where $t\in 1,...,|A|$. Then, we can write down the following recursion:
	\begin{eqnarray}
	V_1 &=& 0\\
	V_t &=& 1 + (1-\epsilon) V_{t-1} \epsilon \left[\frac{t-1}{|A|}V_{t-1} + \frac{1}{A}V_1 + \frac{|A|-t}{|A|}V_t\right]\label{eq:lemma2}
	\end{eqnarray}
	Equation \eqref{eq:lemma2} can be simplified to 
	\begin{eqnarray}
		V_t &=& \frac{1-\epsilon+\epsilon\frac{t-1}{|A|}}{1-\epsilon\frac{|A|-t}{|A|}}V_{t-1} + \frac{1}{1-\epsilon\frac{|A|-t}{|A|}}\\
		&\leq& V_{t-1}+ \frac{1}{1-\epsilon}
	\end{eqnarray}
	Thus, we have
	\begin{equation}
	V_t\leq \frac{t-1}{1-\epsilon} \leq \frac{|A|}{1-\epsilon}
	\end{equation}
	as needed.
\end{proof}
Now, we prove Theorem \ref{delta4}.
\begin{proof}
	Let $i\in [1,n]$ be given. First, consider the number of episodes, on which the agent was found in at least one state $s_t$ and is equipped with a policy $\pi_t$, s.t. $\pi_t(s_t)\notin \nu_i(s_t)$. Since each of these episodes contains at least one state $s_t$ on which $\nu_i$ has not been successfully taught, and according to Lemma 2, it takes at most $\frac{|A|}{1-\epsilon}$ visits to each state to successfully teach any actions $A(s)$, there will be at most $\frac{|S||A|}{1-\epsilon}$ such episodes. These episodes take at most $\frac{|S||A|H}{1-\epsilon}$ iterations for all target states. Out of these episodes, we can safely assume that the agent has successfully picked up $\nu_i$ for all the states visited.
	
	Next, we want to show that the expected number of iterations taken by $\pi^\dagger_i$ to get to $s_i$ is upper bounded by $\left[\frac{|A|}{\epsilon}\right]^{i-1} D$, where $\pi_i^\dagger$ is defined as 
	\begin{equation}
	\pi_i^\dagger = \argmin_{\pi\in \Pi, \pi(s_j)\in \pi^\dagger(s_j), \forall j\leq i-1} \E{s_0\sim \mu_0}{d_\pi(s_0,s_i)}.
	\end{equation}
	First, we define another policy
	\begin{equation}
	\hat\pi_i^\dagger(s) =
	\left\{
	\begin{array}{ll}
	\pi^\dagger(s)&\mbox{if } s\in\{s_1, ..., s_{i-1}\}\\
	\pi_{s_i}(s)& \mbox{otherwise}
	\end{array}
	\right.
	\end{equation}
	Clearly $\E{s_0\sim \mu_0}{d_{\pi_i^\dagger}(s_0,s_i)} \leq \E{s_0\sim \mu_0}{d_{\hat\pi_i^\dagger}(s_0,s_i)}$ for all $i$.
	
	We now prove by induction that $d_{\hat\pi_i^\dagger}(s,s_i)\leq \left[\frac{|A|}{\epsilon}\right]^{i-1} D$ for all $i$ and $s\in S$.
	
	First, let $i=1$, $\hat\pi_i^\dagger = \pi_{s_1}$, and thus $d_{\hat\pi_i^\dagger}(s,s_i)\leq D$.
	
	Next, we assume that when $i=k$, $d_{\hat\pi_i^\dagger}(s,s_i)\leq D_k$, and would like to show that when $i=k+1$, $d_{\hat\pi_i^\dagger}(s,s_i)\leq \left[\frac{|A|}{\epsilon}\right] D_k$. Define another policy
	\begin{equation}
	\tilde\pi_i^\dagger(s) =
	\left\{
	\begin{array}{ll}
	\pi^\dagger(s)&\mbox{if } s\in\{s_2, ..., s_{i-1}\}\\
	\pi_{s_i}(s)& \mbox{otherwise}
	\end{array}
	\right.
	\end{equation}
	which respect the target policies on $s_2, ..., s_{i-1}$, but ignore the target policy on $s_1$. By the inductive hypothesis, we have that $d_{\tilde\pi_i^\dagger}(s,s_i)\leq D_k$. Consider the difference between $d_{\hat\pi_i^\dagger(s)}(s_1, s_k)$ and $d_{\tilde\pi_i^\dagger}(s_1, s_k)$. Since $\hat\pi_i^\dagger(s)$ and $\tilde\pi_i^\dagger$ only differs by their first action at $s_1$, we can derive Bellman's equation on each policy, which yield
	\begin{eqnarray}
		d_{\hat\pi_i^\dagger}(s_1, s_k) &=& (1-\epsilon)Q(s_1, \pi^\dagger(s_1)) + \epsilon \bar Q(s_1,a)\\
		&\leq& \max_{a \in A} Q(s_1,a)\\
		d_{\tilde\pi_i^\dagger}(s_1, s_k) &=& (1-\epsilon)Q(s_1, \pi_{s_1}(s_1)) + \epsilon \bar Q(s_1,a)\\
		&\geq& \frac{\epsilon}{|A|}\max_{a \in A} Q(s_1,a)\\
	\end{eqnarray}
	where $Q(s_1,a)$ denotes the expected distance to $s_k$ from $s_1$ by performing action $a$ in the first step, and follow $\hat\pi_i^\dagger$ thereafter, and $\bar Q(s_1,a)$ denote the expected distance by performing a uniformly random action in the first step.
	Thus, 
	\begin{equation}
	d_{\hat\pi_i^\dagger}(s, s_k) \leq \frac{|A|}{\epsilon} d_{\tilde\pi_i^\dagger}(s_1, s_k) 
	\end{equation}
	With this, we can perform the following decomposition:
	\begin{eqnarray*}
		d_{\hat\pi_i^\dagger}(s, s_k) &=& \Pr\left[\mbox{visit } s_1 \mbox{ before reaching } s_k\right]\left(d_{\hat\pi_i^\dagger}(s, s_1) + d_{\hat\pi_i^\dagger}(s_1, s_k)\right)+ \Pr\left[\mbox{not visit } s_1\right]\left(d_{\hat\pi_i^\dagger}(s, s_1)\vert\mbox{not visit }s_1\right)\\
		&\leq&  \Pr\left[\mbox{visit } s_1 \mbox{ before reaching } s_k\right]\left(d_{\tilde\pi_i^\dagger}(s, s_1) +\frac{|A|}{\epsilon} d_{\tilde\pi_i^\dagger}(s_1, s_k)\right)+ \Pr\left[\mbox{not visit } s_1\right]\left(d_{\tilde\pi_i^\dagger}(s, s_k)\vert\mbox{not visit }s_1\right)\\
		&=& d_{\tilde\pi_i^\dagger}(s, s_k) + \left(\frac{|A|}{\epsilon}-1\right)d_{\tilde\pi_i^\dagger}(s_1, s_k)\\
		&\leq& D_k + \left(\frac{|A|}{\epsilon}-1\right) D_k = \frac{|A|}{\epsilon}D_k.
	\end{eqnarray*}
	This completes the induction. Thus, we have 
	\begin{equation}
	d_{\hat\pi_i^\dagger}(s, s_i) \leq \left(\frac{|A|}{\epsilon}\right)^{i-1}D,
	\end{equation}
	and the total number of iterations taken to arrive at all target states sequentially sums up to
	\begin{equation}
	\sum_{i=1}^n d_{\hat\pi_i^\dagger}(s, s_i) \leq \left(\frac{|A|}{\epsilon}\right)^{n}D.
	\end{equation}
	Finally, each target states need to visited for $\frac{|A|}{1-\epsilon}$ number of times to successfully enforce $\pi^\dagger$. Adding the numbers for enforcing each $\pi^\dagger_i$ gives the correct result.
\end{proof}

\section{Detailed Explanation of Fast Adaptive Attack Algorithm}
\label{sec:FAA}
In this section, we try to give a detailed walk-through of the Fast Adaptive Attack Algorithm (FAA) with the goal of providing intuitive understanding of the design principles behind FAA. For the sake of simplisity, in this section we assume that the Q-learning agent is $\epsilon = 0$, such that the attacker is able to fully control the agent's behavior. The proof of correctness and sufficiency in the general case when $\epsilon \in [0,1]$ is provided in section \ref{sec:delta4proof}.

\paragraph{The Greedy Attack:}To begin with, let's talk about \emph{the greedy attack}, a fundamental subroutine that is called in every step of FAA to generate the actual attack. Given a desired (partial) policy $\nu$, the greedy attack aims to teach $\nu$ to the agent in a greedy fashion. Specifically, at time step $t$, when the agent performs action $a_t$ at state $s_t$, the greedy attack first look at whether $a_t$ is a desired action at $s+t$ according to $s\nu$, i.e. whether $a_t\in \nu(s_t)$. If $a_t$ is a desired action, the greedy attack will produce a large enough $\delta_t$, such that after the Q-learning update, $a_t$ becomes strictly more preferred than all undesired actions, i.e. $Q_{t+1}(s_t,a_t)> \max_{a\notin \nu(s_t)}Q_{t+1}(s_t,a)$. On the other hand, if $a_t$ is not a desired action, the greedy attack will produce a negative enough $\delta_t$, such that after the Q-learning update, $a_t$ becomes strictly less preferred than all desired actions, i.e. $Q_{t+1}(s_t,a_t)< \max_{a\in \nu(s_t)}Q_{t+1}(s_t,a)$. It can be shown that with $\epsilon = 0$, it takes the agent at most $|A|-1$ visit to a state $s$, to force the desired actions $\nu(s)$.

Given the greedy attack procedure, one could directly apply the greedy attack with respect to $\pi^\dagger$ throughout the attack procedure. The problem, however, is efficiency. The attack is not considered success without the attacker achieving the target actions in ALL target states, not just the target states visited by the agent. If a target state is never visited by the agent, the attack never succeed. $\pi^\dagger$ itself may not efficiently lead the agent to all the target states. A good example is the chain MDP used as the running example in the main paper. In section \ref{sec:coveringtimeproof}, we have shown that if an agent follows $\pi^\dagger$, it will take exponentially steps to reach the left-most state. In fact, if $\epsilon = 0$, the agent will never reach the left-most state following $\pi^\dagger$, which implies that the naive greedy attack w.r.t. $\pi^\dagger$ is in fact infeasible. Therefore, explicit navigation is necessary. This bring us to the second component of FAA, \emph{the navigation polices}.

\paragraph{The navigation polices:} Instead of trying to achieve all target actions at once by directly appling the greedy attack w.r.t. $\pi^\dagger$, FAA aims at one target state at a time. Let $s^\dagger_{(1)}, ..., s^\dagger_{(k)}$ be an order of target states. We will discuss the choice of ordering in the next paragraph, but for now, we will assume that an ordering is given. The agent starts off aiming at forcing the target actions in a single target state $s^\dagger_{(1)}$. To do so, the attacer first calculate the corresponding navigation policy $\nu_1$, where $\nu_1(s_t) = \pi_{s^\dagger_{(1)}}(s_t)$ when $s_t\neq s^\dagger_{(1)}$, and $\nu_1(s_t) = \pi^\dagger(s_t)$ when $s_t= s^\dagger_{(1)}$. That is, $\nu_1$ follows the shortest path policy w.r.t. $s^\dagger_{(1)}$ when the agent has not arrived at $s^\dagger_{(1)}$, And when the agent is in $s^\dagger_{(1)}$, $\nu_1$ follows the desired target actions. Using the greedy attack w.r.t. $\nu_1$ allows the attacker to effectively lure the agent into $s^\dagger_{(1)}$ and force the target actions $\pi^\dagger(s^\dagger_{(1)})$. After successfully forcing the target actions in $s^\dagger_{(1)}$, the attacker moves on to $s^\dagger_{(2)}$. This time, the attacker defines the navigation policy $\nu_2$ similiar to $\nu_1$, except that we don't want the already forced $\pi^\dagger(s^\dagger_{(1)})$ to be untaught. As a result, in $\nu_2$, we define $\nu_2(s^\dagger_{(1)}) = \pi^\dagger(s^\dagger_{(1)})$, but otherwise follows the corresponding shortest-path policy $\pi_{s^\dagger_{(2)}}$. Follow the greedy attack w.r.t. $\nu_2$, the attacker is able to achieve $\pi^\dagger(s^\dagger_{(2)})$ efficiently without affecting $\pi^\dagger(s^\dagger_{(1)})$. This process is carried on throughout the whole ordered list of target states, where the target actions for already achieved target states are always respected when defining the next $\nu_i$. If each target states $s^\dagger_{(i)}$ can be reachable with the corresponding $\nu_i$, then the whole process will terminate at which point all target actions are guaranteed to be achieved. However, the reachability is not always guaranteed with any ordering of target states. Take the chain MDP as an example. if the 2nd left target state is ordered before the left-most state, then after teaching the target action for the 2nd left state, which is moving right, it's impossible to arrive at the left-most state when the navigation policy resepct the moving-right action in the 2nd left state. Therefore, the \emph{ordering} of target states matters.

\paragraph{The ordering of target states:} FAA orders the target states descendingly by their shortest distance to the starting state $s_0$. Under such an ordering, the target states achieved first are those that are farther away from the starting state, and they necessarily do not lie on the shortest path of the target states later in the sequence. In the chain MDP example, the target states are ordered from left to right. This way, the agent is always able to get to the currently focused target state from the starting state $s_0$, without worrying about violating the already achieved target states to the left. However, note that the bound provided in theorem \ref{delta4} do not utilize this particular ordering choice and applies to any ordering of target states. As a result, the bound diverges when $\epsilon\rightarrow 0$, matching with the pathological case described at the end of the last paragraph.

\section{Experiment Setting and Hyperparameters for TD3}
\label{sec:TD3params}
Throughout the experiments, we use the following set of hyperparameters for TD3, described in Table \ref{table:hyperparameters}. The hyperparameters are selected via grid search on the Chain MDP of length 6. Each experiment is run for 5000 episodes, where each episode is of 1000 iteration long. The learned policy is evaluated for every $10$ episodes, and the policy with the best evaluation performance is used for e evaluations in the experiment section.
\begin{table}[t]
	\centering
	\begin{tabular}{| l | c | l |} 
		\hline
		Parameters & Values & Description\\ [0.5ex] 
		\hline\hline
		exploration noise & $0.5$ & Std of Gaussian exploration noise.\\
		batch size & 100 & Batch size for both actor and critic\\
		discount factor & 0.99 & Discounting factor for the attacker problem.\\
		policy noise & 0.2 & Noise added to target policy during critic update.\\
		noise clip & $[-0.5, 0.5]$ & Range to clip target policy noise.\\
		action L2 weight & 50 & Weight for L2 regularization added to the actor network optimization objective.\\
		buffer size & $10^7$ & Replay buffer size, larger than total number of iterations.\\
		optimizer & Adam & Use the Adam optimizer.\\
		learning rate critic & $10^{-3}$ & Learning rate for the critic network.\\
		learning rate actor & $5^{-4}$ & Learning rate for the actor network.\\
		$\tau$ & $0.002$ & Target network update rate.\\
		policy frequency & 2& Frequency of delayed policy update.\\
		\hline
	\end{tabular}
	\caption{Hyperparameters for TD3.}
	\label{table:hyperparameters}
\end{table}

\section{Additional Experiments}
\subsection{Additional Plot for the rate comparison experiment}
\label{sec:normal scale}
See Figure \ref{fig:exp_length_normal}.
\begin{figure}[ht!]
	\centering
	\includegraphics[width=0.5\columnwidth]{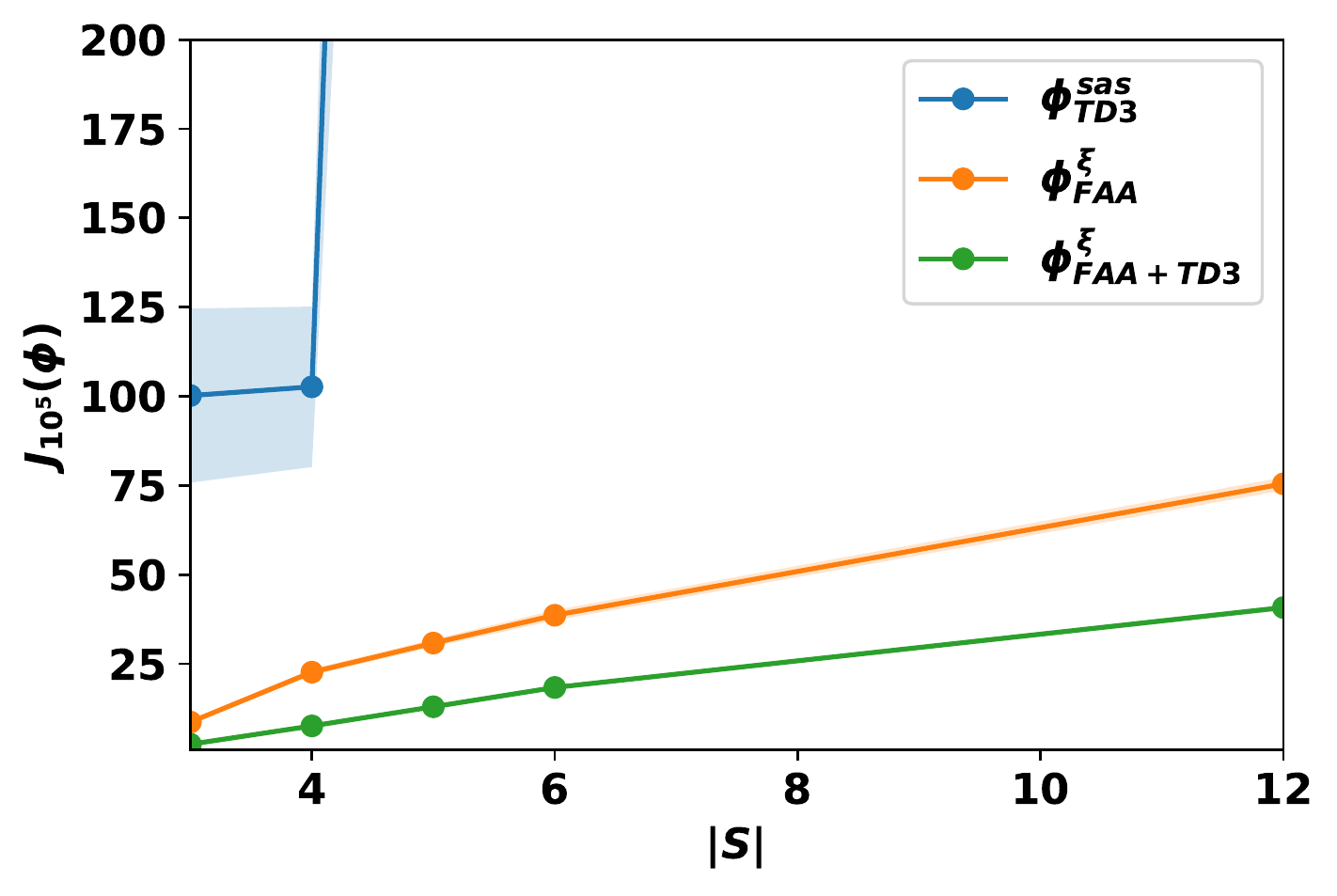}
	\caption{Attack performances on the chain MDP of different length in the normal scale. As can be seen in the plot, both $\phi^{\xi}_{FAA}$ + $\phi^{\xi}_{TD3 + FAA}$ achieve linear rate.}
	\label{fig:exp_length_normal}
\end{figure}

\subsection{Additional Experiments: Attacking DQN}
\label{sec:DQN}

Throughout the main paper, we have been focusing on attacking the tabular Q-learning agent. However, the attack MDP also applies to arbitrary RL agents. We describe the general interaction protocol in Alg. \ref{alg:protocol2}. Importantly, we assume that the RL agent can be fully characterized by an \textbf{internal state}, which determines the agent's current behavior policy as well as the learning update.
\begin{algorithm}[ht!]
	\caption{Reward Poisoning against general RL agent}\label{alg:protocol2}
	\begin{flushleft}
		\textbf{Parameters:} MDP $(S, A, R, P, \mu_0)$, RL agent hyperparameters.\\
	\end{flushleft}
	\begin{algorithmic}[1]
		\FOR{$t = 0,1,...$}
		\STATE agent at state $s_t$, has internal state $\theta_0$.
		\STATE agent acts according to a behavior policy:\\
		$a_t \leftarrow \pi_{\theta_t}(s_t)$
		\STATE environment transits $s_{t+1} \sim P(\cdot \mid s_t, a_t)$, produces reward $r_t=R(s_t,a_t,s_{t+1})$ and an end-of-episode indicator $EOE$.
		\STATE attacker perturbs the reward to $r_t+\delta_t$
		\STATE agent receives $(s_{t+1}, r_t+\delta_t, EOE)$,  performs one-step of internal state update:
		\begin{eqnarray}
		\theta_{t+1} = f(\theta_t, s_t, a_t, s_{t+1}, r_t+\delta_t, EOE)
		\end{eqnarray}
		\STATE environment resets if $EOE = 1$: $s_{t+1} \sim \mu_0$.
		\ENDFOR
	\end{algorithmic}
\end{algorithm}
\begin{figure}[ht!]
	\centering
	\includegraphics[width=0.5\columnwidth]{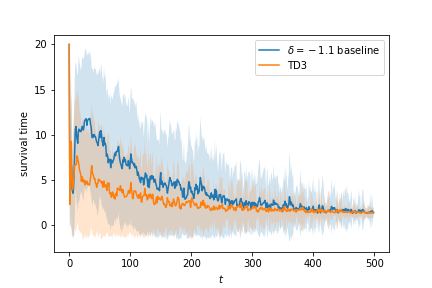}
	\caption{Result for attacking DQN on the Cartpole environment. The left figure plots the cumulative attack cost $J_T(\phi)$ as a function of $T$. The right figure plot the performance of the DQN agent $J(\theta_t)$ under the two attacks.}
	\label{fig:cartpole}
\end{figure}
For example, if the RL agent is a Deep Q-Network (DQN), the internal state will consist of the Q-network parameters as well as the transitions stored in the replay buffer.

In the next example, we demonstrate an attack against DQN in the cartpole environment. In the cartpole environment, the agent can perform 2 actions, moving left and moving right, and the goal is to keep the pole upright without moving the cart out of the left and right boundary. The agent receives a constant $+1$ reward in every iteration, until the pole falls or the cart moves out of the boundary, which terminates the current episode and the cart and pole positions are reset.

In this example, the attacker's goal is to poison a well-trained DQN agent to perform as poorly as possible. The corresponding attack cost $\rho(\xi_t)$ is defined as $J(\theta_t)$, the expected total reward received by the current DQN policy in evaluation. The DQN is first trained in the clean cartpole MDP and obtains the optimal policy that successfully maintains the pole upright for 200 iterations (set maximum length of an episode). The attacker is then introduced while the DQN agent continues to train in the cartpole MDP. We freeze the Q-network except for the last layer to reduce the size of the attack state representation. We compare TD3 with a naive attacker that perform $\delta_t = -1.1$ constantly. The results are shown in Fig. \ref{fig:cartpole}. 

One can see that under the TD3 found attack policy, the performance of the DQN agent degenerates much faster compared to the naive baseline. While still being a relatively simple example, this experiment demonstrates the potential of applying our adaptive attack framework to general RL agents.
\end{document}